\newcommand{\dnote}[1]{}
\newcommand{\pnote}[1]{}
\newcommand{\omittext}[1]{}
\newcommand{\qed}{\hfill \ensuremath{\Box}}
\newcommand{\PR}[1]{ \mspace{2mu} {\mathbb{P}}  \mspace{-2mu} \left[  #1 \right]}
\newcommand{\QR}[1]{ \mspace{2mu} {\mathbb{Q}}  \mspace{-2mu} \left[  #1 \right]}
\newcommand{\EXP}[1]{\exp \left( #1 \right) }
\newcommand{\bcrit}{ {\beta^*} }
\newcommand{\Rel}{ {\cal R} }
\newcommand{\E}{\mathbb{E}}
\newcommand{\eqdef}{\ensuremath{\mathrel{\stackrel{\mathrm{def}}{=}}}}
\newcommand{\gmax}{\gamma_{\text{\rm max}}}
\newcommand{\gmin}{\gamma_{\text{\rm min}}}
\newcommand{\bndref}[1]{Bound~(\ref{#1})}
\newcommand{\note}[1]{} 
\newcommand{\M}{ {\ensuremath {\cal M} } }
\newcommand{\Mb}{ {\ensuremath {\cal M}_{\beta} } }
\newcommand{\Mcrit}{ {\ensuremath {\cal M}_{\bcrit} } }
\begin{document} 

\title{On the Necessity of Irrelevant Variables }


\author{\name David P.\ Helmbold \email dph@soe.ucsc.edu \\
       \addr Department of Computer Science\\
       University of California, Santa Cruz \\
       Santa Cruz, CA 95064, USA
       \AND
       \name Philip M.\ Long \email plong@sv.nec-labs.com \\
       \addr NEC Labs America \\
       10080 N. Wolfe Rd, SW3-350 \\
       Cupertino, CA 95014, USA}

\editor{G\'{a}bor Lugosi}

\maketitle

\begin{abstract}
This work explores the effects of relevant and
irrelevant boolean variables on the accuracy of classifiers.
The analysis uses the assumption that the variables are
conditionally independent given the class, and focuses on a
natural family of learning algorithms for such sources when the
relevant variables have a small advantage over random guessing.  
The main result is that algorithms relying predominately on irrelevant
variables have error probabilities that quickly go to $0$ in
situations where algorithms that limit the use of irrelevant variables
have errors bounded below by a positive constant.  We also show that
accurate learning is possible even when there are so few examples that
one cannot determine with high confidence whether or not any
individual variable is relevant.
\end{abstract} 

\begin{keywords}
Feature Selection, Generalization, Learning Theory
\end{keywords}

\section{Introduction}

When creating a classifier, a natural inclination is
to only use variables that are obviously relevant since irrelevant 
variables typically decrease the accuracy of a classifier. 
On the other hand,
this paper shows that 
the harm from irrelevant variables can be much less than the benefit from relevant variables
and therefore it is possible to learn very accurate classifiers even when almost all of the variables are irrelevant. 
It can be advantageous to continue adding variables, even as their prospects for being relevant fade away. Ê
We show this with theoretical analysis and experiments using artificially generated data. Ê

We provide an illustrative analysis that isolates the effects
of relevant and irrelevant variables on a classifier's accuracy. 
We analyze the case in which variables complement one another, which we
formalize using the common assumption of conditional independence
given the class label.  
We focus on the situation where relatively few of the many variables are relevant,  and the relevant variables are only weakly predictive.\footnote{Note that in many
natural settings the individual variables are only weakly associated
with the class label.  
This can happen when a lot of measurement error
is present, as is seen in microarray data. }
Under these conditions, algorithms that cast a wide net can succeed while more selective algorithms fail.

We prove upper bounds
on the error rate of a very simple learning algorithm that may 
include many irrelevant variables in its hypothesis.
We also prove a contrasting lower bound on the error
of every learning algorithm that uses mostly relevant
variables.
The combination of these results show that
the simple algorithm's 
error rate approaches zero 
in situations where every algorithm that predicts with mostly relevant variables 
has an error rate greater than a positive constant.

Over the past decade or so, a number of empirical and theoretical
findings have challenged the traditional rule of thumb
described by \citet{bishop06} as follows.
\begin{quote}
One rough heuristic that is sometimes advocated is that the number of data points should 
be no less than some multiple (say 5 or 10) of the number of adaptive parameters in the model.
\end{quote}
The Support Vector Machine
literature \citep[see][]{Vap98}  views algorithms that compute
apparently complicated functions of a given set of variables as
linear classifiers applied to an expanded, even infinite, set of
features.  
These empirically perform well on test data,
and theoretical accounts have been given for this.  
Boosting and Bagging
algorithms also generalize well, despite combining large numbers 
of simple classifiers -- even if the number of such ``base
classifiers'' is much more than the number of training examples
\citep{Qui96,Bre98,SFBL98}.  This is despite the fact that 
\citet{FHT00} showed
the behavior of such classifiers is closely related to
performing logistic regression on a potentially vast set of
features (one for each possible decision tree, for example).

Similar effects are sometimes found even when the features added are
restricted to the original ``raw'' variables.
Figure~\ref{f:shrunkencentroids}, which is reproduced from
\citep{THNC02}, is one example.  The curve labelled ``te'' is the
test-set error, and this error is plotted as a function of the number
of features selected by the Shrunken Centroids algorithm.  The best
accuracy is obtained using a classifier that depends on the expression
level of well over 1000 genes, despite the fact that there are only a
few dozen training examples.

\begin{figure}
\begin{center}
\includegraphics[width=4in]{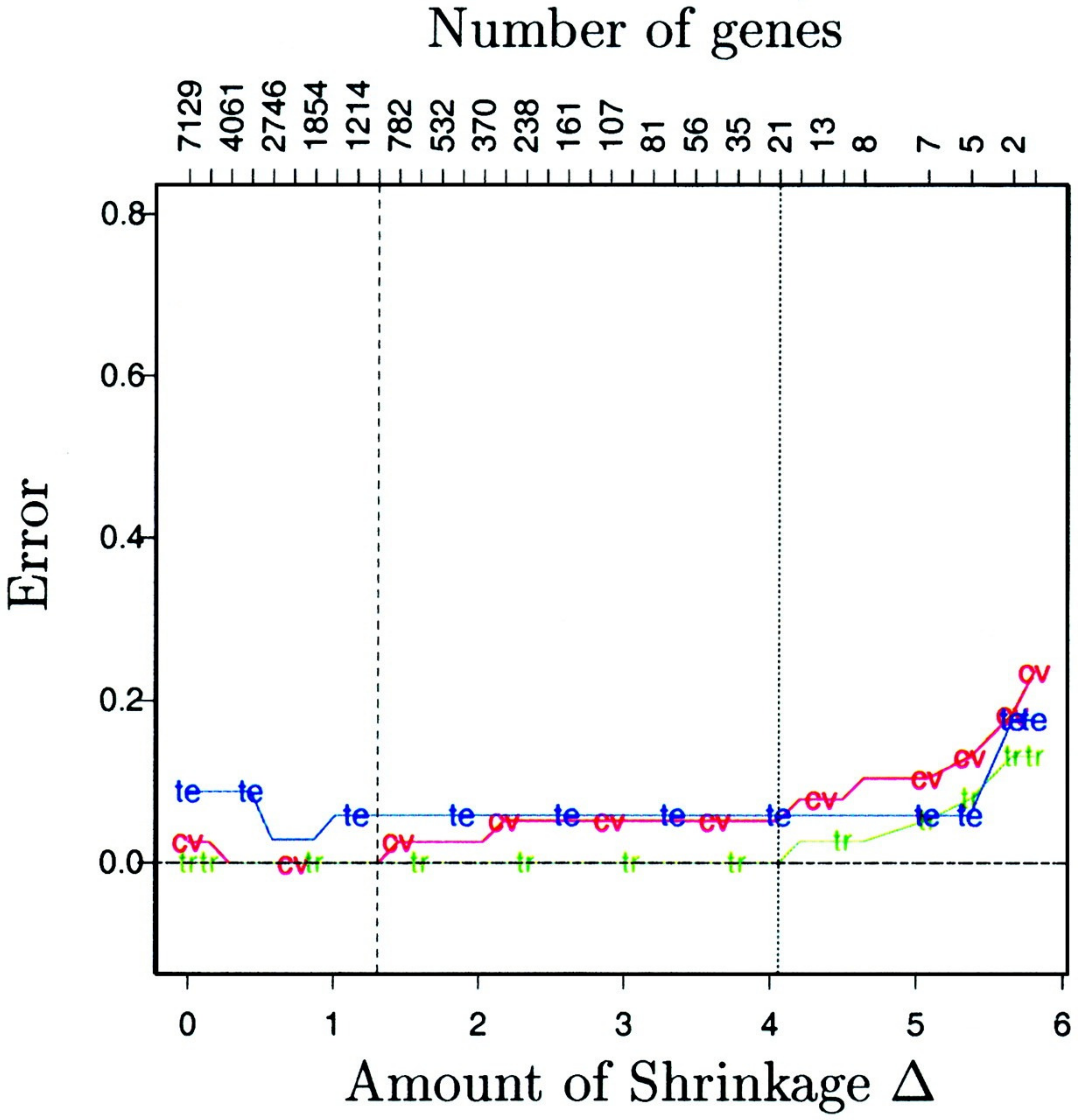} \\
\end{center}
\caption{
This graph is reproduced from \citep{THNC02}.  For a
  microarray dataset, the training error, test error, and
  cross-validation error are plotted as a function both of the number
  of features (along the top) included in a linear model and a
  regularization parameter $\Delta$ (along the bottom).}
\label{f:shrunkencentroids}
\end{figure}

It is impossible to tell if most of the variables used by the most
accurate classifier in Figure~\ref{f:shrunkencentroids} are
irrelevant.  However, we do know which variables are
relevant and irrelevant in synthetic data (and can generate as many 
test examples as desired).  
Consider for the moment a simple algorithm applied to a
simple source.  Each of two classes is equally likely, and there are
$1000$ relevant boolean variables, $500$ of which agree with the class label
with probability $1/2 + 1/10$, and $500$ which disagree with the class
label with probability $1/2 + 1/10$.  Another $99000$ boolean variables
are irrelevant.  The algorithm is equally simple: it has a parameter
$\beta$, and outputs the majority vote over those features (variables or
their negations) that agree with the class label on 
a $1/2 +\beta$ fraction of the training examples.  
Figure~\ref{f:synth} plots three runs of this
algorithm with $100$ training examples, and $1000$ test examples.
Both the accuracy of the classifier and the fraction of relevant
variables are plotted against the number of variables used in the
model, for various values of $\beta$.\footnote{ In the first graph,
only the results in which fewer than 1000 features were chosen 
are shown, since 
including larger feature sets obscures the shape of the graph in the
most interesting region, where relatively few features are chosen.}
Each time, the best accuracy is
achieved when an overwhelming majority of the variables used in the
model are irrelevant, and those models with few ($<25\%$) 
irrelevant variables perform far worse.
Furthermore, the best accuracy is obtained with
a model that uses many more variables than there are training
examples.  
Also, accuracy over 90\% is achieved even though there
are few training examples 
and the correlation of the individual variables with the class label is 
weak.
In fact, the number of examples is so small and the correlations are
so weak that, for any individual feature, it is impossible to
confidently tell whether or not  the
feature is relevant.

\begin{figure}[tb]
\begin{center}
\subfigure{\includegraphics[width=2.75in]{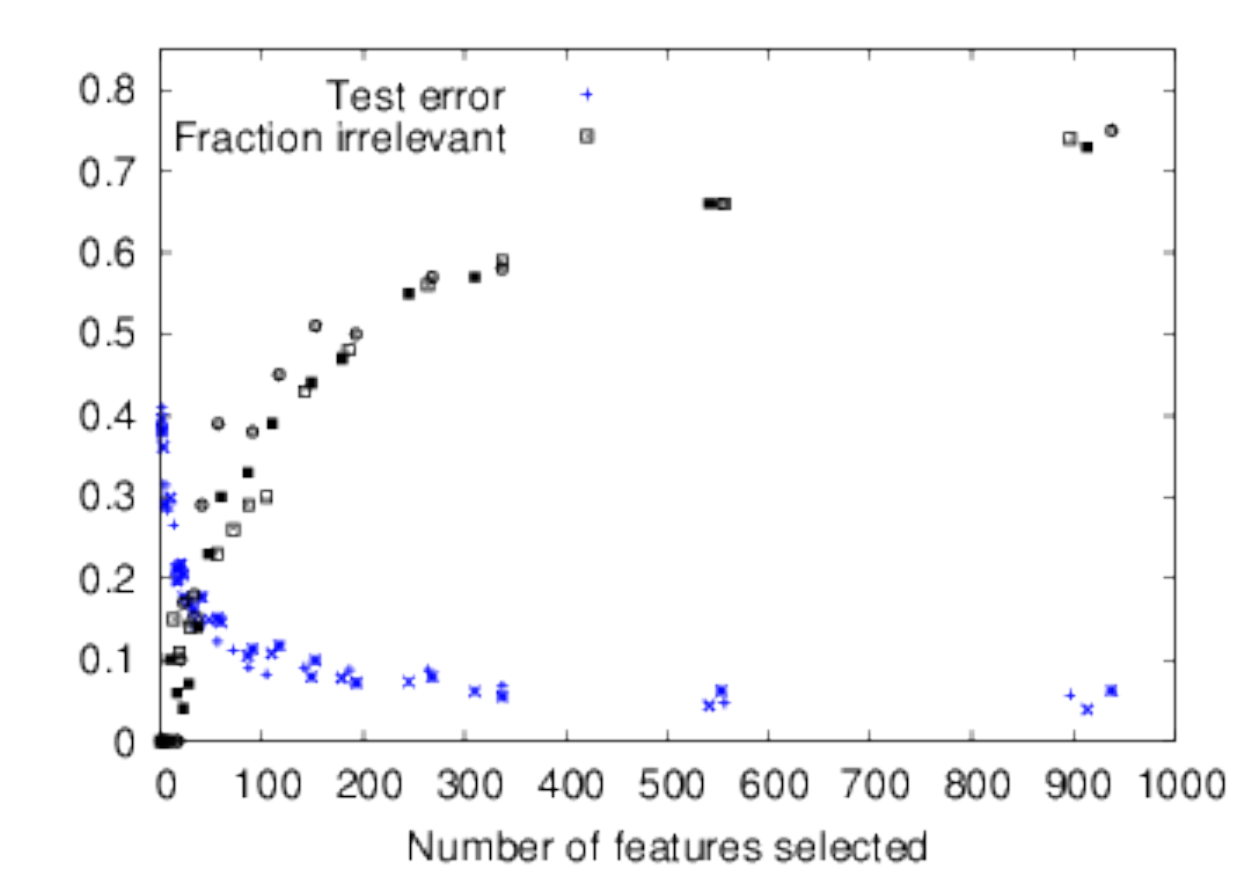}}
\subfigure{\includegraphics[width=2.75in]{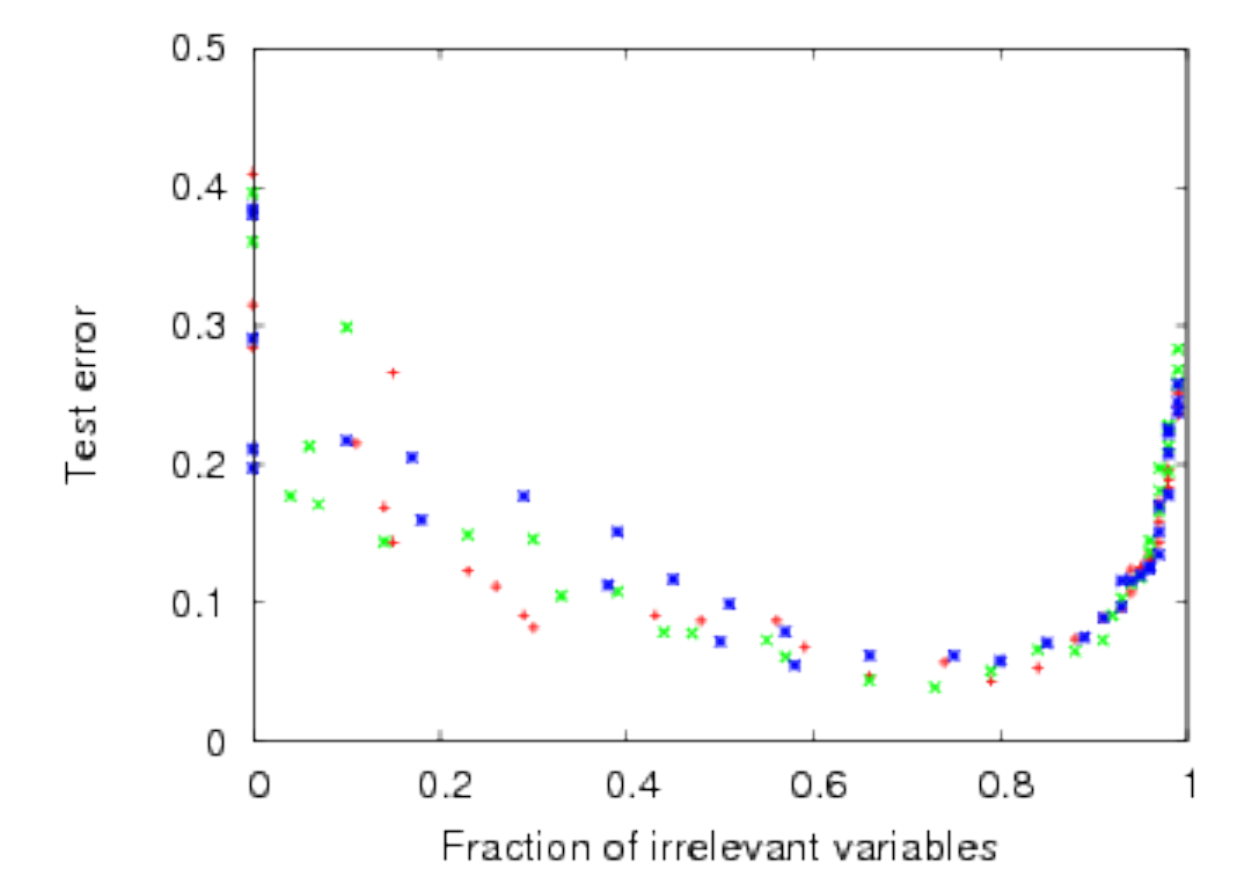}}
\end{center}
\caption{ \label{f:synth}
Left: Test error (blue) and fraction of irrelevant variables (black) 
as a function of the number of features. 
Right: Scatter plot of test error rates (vertical) against
fraction of irrelevant variables (horizontal).}
\end{figure}

Assume classifier $f$ consists of a vote over $n$ variables that are conditionally 
independent given the class label.
Let $k$ of the variables agree with the class label with probability $1/2 + \gamma$, and 
the remaining $n-k$ variables agree with the label with probability $1/2$.
Then the probability that $f$ is incorrect is at most 
\begin{equation}
\label{e:nolearn.intro}
\exp\left( \frac{-2 \gamma^2 k^2}{n} \right)
\end{equation}
(as shown in Section~\ref{s:basic}).  The error bound decreases exponentially
in the {\em square} of the number of relevant variables.  The
competing factor increases only {\em linearly} with the number of irrelevant
variables.  Thus, a very accurate classifier can be obtained with a
feature set consisting 
predominantly of irrelevant variables.

In Section~\ref{s:learning} we consider learning from training data where
the variables are
conditionally independent given the class label.
Whereas Equation~(\ref{e:nolearn.intro}) bounded the error as a
function of the number of variables $n$ and relevant variables $k$ in the
{\em model}, we now use capital 
$N$ and capital $K$ for the total number of  variables and
number of relevant variables in the {\em data}. 
The $N-K$ irrelevant variables are independent of the label,  agreeing with it with
probability $1/2$.  The $K$ relevant variables either agree with the
label with probability $1/2 + \gamma$ or with probability $1/2 - \gamma$.  
We analyze an algorithm
that chooses a value $\beta\geq 0$ and outputs a majority vote over all
features that agree with the class label on at least $1/2 + \beta$ of the
training examples
(as before, each feature is either a variable or its negation).  
Our Theorem~\ref{t:learning.beta} shows that if $\beta \leq \gamma$ and the
algorithm is given $m$ training examples, then the probability that it
makes an incorrect prediction on an independent test example is at
most
\begin{equation}
\label{e:upper.with.beta}
(1 + o(1)) \exp \left( - 2 \gamma^2  K \left(
        \frac{[1 - 8e^{-2 (\gamma - \beta)^2 m} - \gamma)]_+^2}
             {1 + 8 (N/K) e^{-2 \beta^2 m} + \gamma}
                  \right) \right),
\end{equation}
where $[z]_+ \eqdef \max \{ z, 0 \}$.  (Throughout the paper, the
``big Oh'' and other asymptotic notation will be for the case where
$\gamma$ is small, $K \gamma$ is large, and $N/K$ is large.  Thus the
edge of the relevant features and the fraction of features that are
relevant both approach zero while the total number of relevant
features increases.  If $K$ is not large relative to $1/\gamma^2$,
even the Bayes optimal classifier is not accurate.  No other
assumptions about the relationship between the parameters are needed.)

When $\beta \leq \gamma/2$ and the number $m$ of training examples
satisfies $m \geq c/\gamma^2$ for an absolute constant $c$, we also
show in Theorem~\ref{t:beta.cgamma} that the error probability is at
most
\begin{equation}
\label{e:inclusive.intro}
(1+o(1)) \exp \left( - \gamma^2 K^2/N \right).
\end{equation}
If $N = o(\gamma^2 K^2)$, this error probability goes to zero.
With only $\Theta(1/\gamma^2)$ examples, 
an algorithm cannot even tell with high confidence
whether a relevant variable is positively or negatively associated
with the class label, much less solve the more difficult problem of
determining whether or not a variable is relevant.
Indeed, this error bound is also achieved using
$\beta = 0$, when, for each variable $X_i$, the algorithm includes either $X_i$
or its negation in the vote.\footnote{\label{footnote}To be precise, the algorithm includes each variable or its negation
when $\beta = 0$ and $m$ is odd,  and includes both the
 variable and its negation when $m$ is even and the variable agrees
 with the class label exactly half the time.  But, any time both
a variable and its negation are included, their votes cancel.
We will always use the smaller equivalent model obtained by 
removing such canceling votes.
} 
Because bound~\eqref{e:inclusive.intro}
holds even when $\beta = 0$, it can be achieved by an algorithm that does not
use knowledge of $\gamma$ or $K$.

Our upper bounds illustrate the potential rewards for algorithms that
are ``inclusive'',  using many of the available variables in their
classifiers -- even when this means that most variables in the model
are irrelevant.  
We also prove a complementary lower bound that 
illustrates the potential cost when algorithms are ``exclusive''.  
We say that an algorithm is $\lambda$-exclusive if the expectation 
of the fraction of the variables used in its model that are relevant 
is at least $\lambda$.
We show that any $\lambda$-exclusive policy has
an error probability bounded below by 
$\lambda/4$ 
as $K$ and $N/K$ go
to infinity and $\gamma$ goes to $0$ in such a way that the error rate
obtained by the more ``inclusive'' setting $\beta = \gamma/2$ goes to
$0$.  In particular, no $\lambda$-exclusive algorithm (where
$\lambda$ is a positive constant) 
can achieve a bound like (\ref{e:inclusive.intro}).

\paragraph{Relationship to Previous Work}

Donoho and Jin \citep[see][]{DJ08,Jin09} and 
Fan and Fan \citeyearpar{FF08},
building on a line of research on joint testing of multiple
hypotheses \citep[see][]{ABDJ06,ABDG10,DJ04,DJ06,MR06},
performed analyses and simulations using sources with elements in
common with the model studied here, including conditionally
independent variables and a weak association between the variables
and the class labels.
Donoho and Jin 
also pointed out that their
algorithm can produce accurate hypotheses while using many more
irrelevant features than relevant ones.  
The main theoretical results proved in their
papers
describe conditions that imply
that, if the relevant variables are too small a fraction of all the
variables, and the number of examples is too small, then learning is
impossible.  The emphasis of our theoretical analysis is the opposite:
algorithms can tolerate a large number of irrelevant variables,
while using a small number of examples, and algorithms that
avoid irrelevant variables, even to a limited extent, cannot learn
as effectively as algorithms that cast a wider net.
In particular, ours is the first analysis
that we are aware of to have a result qualitatively like
Theorem~\ref{t:lower.lambda}, which demonstrates the limitations of
exclusive algorithms.  

For
the sources studied in this paper, there is a linear
classifier that classifies most random examples correctly with a large
margin, i.e.\ most examples are not close to the decision boundary.
The main motivation for our analysis was to understand the effects of
relevant and irrelevant variables on generalization, but it is
interesting to note that we get meaningful bounds in the extreme case
that $m = \Theta(1/\gamma^2)$, whereas the margin-based bounds that we
are aware of (such as \citet{SFBL98,KP02,DL03,WSetal08}) are vacuous in this
case.  (Since these other bounds hold 
more generally, their overall strength is incomparable to our results.)
\citet{NJ01} showed that the
Naive Bayes algorithm (which ignores class-conditional dependencies)
converges relatively quickly, justifying its use when there are few
examples.  But their bound for Naive Bayes is also vacuous when $m =
\Theta(1/\gamma^2)$.  \citet{BL04} studied the case
in which the class conditional distributions are Gaussians, and
showed how an algorithm which does not model class conditional
dependencies can perform nearly optimally in this case, especially
when the number of variables is large. B\"uhlmann and Yu 
\citeyearpar{BY02}
analyzed the variance-reduction benefits of Bagging with primary
focus on the benefits of the smoother classifier that is obtained
when ragged classifiers are averaged.  As such it takes a different
form than our analysis.

Our analysis demonstrates that certain effects are possible, but how
important this is depends on how closely natural learning settings
resemble our theoretical setting and the extent to which our analysis
can be generalized.  The conditional independence assumption is one
way to express the intuitive notion that variables are not too
redundant.  A limit on the redundancy is needed for results like ours
since, for example, a collection of $\Theta(k)$ perfectly correlated
irrelevant variables would swamp the votes of the $k$ relevant
variables.  On the other hand, many boosting algorithms minimize the
potential for this kind of effect by choosing features in later
iterations that make errors on different examples then the previously
chosen features.  One relaxation of the conditional independence
assumption is to allow each variable to conditionally depend on a
limited number $r$ of other variables, as is done in the formulation
of the Lovasz Local Lemma \citep[see][]{ASE92}.  
As partial illustration of the
robustness of the effects analyzed here, we
generalize upper bound~(\ref{e:nolearn.intro}) to this case in
Section~\ref{s:dependent}.  
There we prove an error bound of 
$c (r+1) \exp\left(\frac{-2\gamma^2 k^2}{n (r + 1)}\right)$ 
when each variable
depends on most $r$ others.  There are a number of ways that one could
imagine relaxing the conditional independence assumption while still
proving theorems of a similar flavor.  Another obvious direction for
generalization is to relax the strict categorization of variables into
irrelevant and $(1/2+\gamma)$-relevant classes.  We believe that many
extensions of this work with different coverage and interpretability
tradeoffs are possible.  For example, our proof techniques easily give
similar theorems when each relevant variable has a probability between
$1/2 + \gamma/2$ and $1/2 + 2 \gamma$ of agreeing with the class label
(as discussed in Section~\ref{s:different}).
Most of this paper uses the cleanest and simplest setting in order to focus
attention on the main ideas. 

We state some useful tail bounds in the next section, and
Section~\ref{s:basic} analyzes the error of simple voting classifiers.
Section~\ref{s:learning} gives bounds on the expected error of
hypotheses learned from training data while Section~\ref{s:lower} shows that,
in certain situations, any exclusive algorithm must have high error
while the error of some inclusive algorithms goes to 0.  In
Section~\ref{s:dependent} we bound the accuracy of voting classifiers
under a weakened independence assumption and in
Section~\ref{s:different} we consider relaxation of the assumption
that all relevant variables have the same edge. 

\section{Tail bounds}
\label{s:tools}

This section gathers together the several tail bounds 
that will be used in various places in the
analysis.  These bounds all assume that $U_1, U_2, \ldots, U_{\ell}$
are $\ell$ independent $\{ 0,1 \}$-valued random variables and
$U=\sum_{i=1}^\ell U_i$.  We start with some upper bounds.

\begin{itemize}
\item
The Hoeffding bound, \citep[see][]{Pol84}:  
\begin{equation}
\label{e:hoeffding}
\PR{\frac{1}{\ell} U 
                - \E\left( \frac{1}{\ell} U \right)
         \geq \eta }
  \leq e^{-2 \eta^2 \ell}.
\end{equation}

\item
The Chernoff bound, \citep[see]{AV79,MR95} and
Appendix~\ref{a:chernoff}.  For any $\eta>0$:
\begin{align}
\label{e:chernoff}
\PR{U>(1+\eta)\E(U)} < \exp\left(-(1+\eta)\E(U) \ln \left(\frac{1+\eta}{e}\right) \right).
\end{align}

\item
For any $0\leq\eta \leq 4$ (see Appendix~\ref{a:chernoff}):  
\begin{equation}
\label{e:leq4}
\PR{U>(1+\eta)\E(U)}
  < \exp\left(-\eta^2 \E(U)/4 \right).
\end{equation}

\item
For any $0<\delta \leq 1$ (see Appendix~\ref{a:highconf}): 
\begin{equation}
\label{e:highconf}
\PR{U> 4\E(U) + 3 \ln(1/\delta)} < \delta .
\end{equation}
\end{itemize}

\noindent We also use the following  lower bounds on the tails of  distributions.

\begin{itemize}
\item If $\PR{U_i = 1} = 1/2$ for all $i$, $\eta > 0$, and $\ell \geq 1/\eta^2$
then (see  Appendix~\ref{a:lower.tail}):
\begin{equation}
\label{e:lower.tail}
\PR{\frac{1}{\ell}U - \frac{1}{\ell} \E\left( U \right)
         \geq \eta }
  \geq \frac{1}{7 \eta \sqrt{\ell}} 
         \exp\left(-2 \eta^2 \ell \right)
            - \frac{1}{\sqrt{\ell}}.
\end{equation}

\item
If $\PR{U_i=1} = 1/2$ for all $i$,  then for all $0 \leq \eta \leq 1/8$
such that $\eta \ell$ is an integer\footnote{For notational simplicity we omit the floors/ceilings implicit in the use of this bound.}
(see Appendix~\ref{a:lower.fair}): 
\begin{equation}
\label{e:lower.fair}
\PR{ \frac{1}{\ell}U - \frac{1}{\ell}\E(U) \geq \eta } \geq \frac{1}{5} e^{-16 \eta^2 \ell}.
\end{equation}

\item 
A consequence of Slud's Inequality~\citeyearpar{Slu77} gives the following
(see Appendix~\ref{a:unfair}).  If $0 \leq \eta \leq 1/5$ and 
$\PR{ U_i = 1 } = 1/2 + \eta$ for all $i$ then:
\begin{equation}
\label{e:unfair}
\PR{ \frac{1}{\ell} U < 1/2 } \geq \frac{1}{4} e^{-5 \eta^2 \ell}.
\end{equation}

\end{itemize}

\noindent
Note that the constants in the above bounds were chosen to be simple and illustrative, rather than the best possible.

\section{The accuracy of models containing relevant and irrelevant variables}
\label{s:basic}

In this section we analyze the accuracy of the models (hypotheses) 
produced by the algorithms in Section~\ref{s:learning}.
Each example is represented by a vector of $N$ binary \emph{variables}
and a class designation.
We use the following generative model:
\begin{itemize}
\item a random class designation from $\{0,1\}$ is chosen, with both
classes equally likely, then
\item each of $K$ \emph{relevant} variables are equal to the class designation
with probability $1/2 + \gamma$ (or with probability $1/2 - \gamma$), and
\item the remaining $N-K$ \emph{irrelevant} variables are equal to the 
class label
with probability $1/2$; 
\item all variables are conditionally independent given the class designation.
\end{itemize}

Which variables are relevant and whether each one is positively or
negatively correlated with the class designations are chosen
arbitrarily ahead of time.

A \emph{feature} is either a variable or its complement.  The $2(N-K)$
\emph{irrelevant} features come from the irrelevant variables, the $K$
\emph{relevant} features agree with the class labels with probability
$1/2+\gamma$, and the $K$ \emph{misleading} features agree
with the class labels with probability $1/2-\gamma$.

We now consider models $\M$ predicting with a majority vote over a subset
of the features.  We use $n$ for the total number of features in model
$\M$, $k$ for the number of relevant features, and $\ell$ for the
number of misleading features (leaving $n-k-\ell$ irrelevant
features).  Since the votes of a variable and its negation ``cancel
out,'' we assume without loss of generality that models include at
most one feature for each variable.  Recall that $[ z ]_{+} \eqdef
\max \{z,0\}$.

\begin{theorem}
\label{t:no.learning.misleading}
Let $\M$ be a majority vote of $n$ features, $k$ of which are
relevant and $\ell$ of which are misleading (and $n-k-\ell$ are
irrelevant).  The probability that $\M$ predicts
incorrectly is at most 
$\displaystyle \exp\left(\frac{-2 \gamma^2 [k  - \ell]_+^2 }{n} \right)$.
\end{theorem}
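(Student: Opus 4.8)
The plan is to reduce the error event to a single lower-tail deviation of a sum of independent $\{0,1\}$ random variables and then invoke the Hoeffding bound~\eqref{e:hoeffding}. First I would fix the true class designation; because the generative model is symmetric between the two labels (every relevant feature agrees with its class with the same probability $\half + \gamma$ no matter which class is drawn), the conditional error probability is identical for the two labels, so it suffices to bound it for one fixed label. Once the label is fixed, the conditional independence assumption makes the features' agreement events mutually independent, which is precisely the hypothesis required by the tail bounds of Section~\ref{s:tools}.

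Next, for each of the $n$ features in $\M$ let $U_i = 1$ if that feature agrees with the fixed class designation and $U_i = 0$ otherwise, and set $U = \sum_{i=1}^n U_i$. These $U_i$ are independent, with $\E(U_i) = \half + \gamma$ for the $k$ relevant features, $\half - \gamma$ for the $\ell$ misleading features, and $\half$ for the remaining $n-k-\ell$ irrelevant features. Summing gives $\E(U) = \frac{n}{2} + \gamma(k-\ell)$; note that each irrelevant feature contributes exactly $\half$, so it enlarges the count $n$ but leaves the deviation $\E(U) - n/2 = \gamma(k-\ell)$ unchanged. The majority vote is incorrect precisely when at most half the features agree, i.e.\ when $U \leq n/2$ (counting ties against $\M$, which only makes the claim stronger).

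Then I would recast this lower-tail event as an upper-tail event to match the form of~\eqref{e:hoeffding}: with $V_i = 1 - U_i$ and $V = n - U$, the $V_i$ are again independent $\{0,1\}$-valued with $\E(V) = \frac{n}{2} - \gamma(k-\ell)$, and $\{U \leq n/2\} = \{V \geq n/2\}$. Applying~\eqref{e:hoeffding} to $V$ with $\eta = \gamma(k-\ell)/n$ gives $\PR{U \leq n/2} = \PR{\frac{1}{n}V - \E(\frac{1}{n}V) \geq \eta} \leq e^{-2\eta^2 n} = \exp\!\left(-2\gamma^2 (k-\ell)^2/n\right)$, the desired bound. This step needs $\eta > 0$, i.e.\ $k > \ell$; when $k \leq \ell$ the asserted bound equals $\exp(-2\gamma^2 [k-\ell]_+^2/n) = 1$ and holds trivially, so the truncation $[\,\cdot\,]_+$ exactly covers the degenerate regime in which the misleading features outnumber the relevant ones.

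I do not expect a serious obstacle: the argument is essentially a single application of Hoeffding's inequality once the setup is in place. The only points needing care are the symmetry reduction that justifies conditioning on one label, the direction of the tail (handled by passing to $V = n - U$ so that the stated upper-tail form of~\eqref{e:hoeffding} applies), and the tie convention, all of which are routine. The conceptual content worth highlighting is simply that the relevant/misleading imbalance enters the exponent quadratically through $(k-\ell)^2$ while the irrelevant features enter only linearly through $n$ in the denominator.
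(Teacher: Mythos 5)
Your proposal is correct and follows essentially the same route as the paper's own proof: the paper also notes that the case $\ell \geq k$ is trivial and otherwise observes that an error requires the fraction of correct voters to fall at least $\gamma(k-\ell)/n$ below its expectation $1/2 + \gamma(k-\ell)/n$, then applies the Hoeffding bound~(\ref{e:hoeffding}). Your write-up merely makes explicit the details the paper leaves implicit (conditioning on the label, the tie convention, and passing to $V = n - U$ to match the stated upper-tail form of the inequality), which is fine.
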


{\bf Proof}: If $\ell \geq k$ then the exponent is $0$ and the bound
trivially holds.  

Suppose $k > \ell$.  Model $\M$ predicts incorrectly only when at most
half of its features are correct.  The expected fraction of correct
voters is $1/2 + \frac{ \gamma (k - \ell) }{n}$, so, for $\M$'s
prediction to be incorrect, the fraction of correct voters must be at
least $\gamma (k - \ell) / n$ less than its expectation.
Applying~(\ref{e:hoeffding}), this probability is at most
\[
\exp\left(\frac{-2 \gamma^2 (k  - \ell)^2 }{n} \right).
\]
\qed

The next corollary shows that even models where most of the features
are irrelevant can be highly accurate.
\begin{corollary}
\label{c:mostly.irrelevant}
If $\gamma$ is a constant, $k -\ell= \omega(\sqrt{n})$ and $k = o(n)$, then
the accuracy of the model approaches $100\%$ while its fraction of
irrelevant variables approaches~$1$ (as $n\rightarrow \infty$).
\end{corollary}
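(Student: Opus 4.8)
The plan is to read both conclusions straight off Theorem~\ref{t:no.learning.misleading} together with some elementary asymptotics, since the corollary is essentially a limiting reading of that bound. First I would dispose of the accuracy claim. The hypothesis $k - \ell = \omega(\sqrt{n})$ forces $k > \ell$ for all sufficiently large $n$, so Theorem~\ref{t:no.learning.misleading} applies with $[k-\ell]_+ = k - \ell$ and gives an error probability of at most $\exp\left(\frac{-2\gamma^2 (k-\ell)^2}{n}\right)$. Rewriting the exponent as $-2\gamma^2\left(\frac{k-\ell}{\sqrt{n}}\right)^2$ and using that $\frac{k-\ell}{\sqrt{n}} \to \infty$ (the meaning of $k-\ell = \omega(\sqrt{n})$), with $\gamma$ held constant the exponent tends to $-\infty$. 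Hence the error bound tends to $0$ and the accuracy tends to $100\%$.

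For the second claim I would count the relevant variables in the model. The model $\M$ contains $k$ relevant and $\ell$ misleading features, all of which come from relevant variables, so the number of relevant variables used is $k + \ell$ and the fraction of irrelevant variables is $1 - \frac{k+\ell}{n}$. It therefore suffices to show $\frac{k+\ell}{n} \to 0$. Since $k - \ell = \omega(\sqrt{n}) > 0$ eventually, we have $\ell < k$, and so $k + \ell < 2k$. Combining this with the hypothesis $k = o(n)$ yields $\frac{k+\ell}{n} \le \frac{2k}{n} \to 0$, so the fraction of irrelevant variables approaches $1$.

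The only step that is not completely immediate is that $\ell = o(n)$: the hypotheses bound $k$ directly but say nothing explicit about the number of misleading features $\ell$. The resolution is the observation above that $k - \ell = \omega(\sqrt{n})$ already implies $\ell < k$ for large $n$, which transfers the bound $k = o(n)$ to $\ell$ as well. This is the one place to be careful; everything else is a direct substitution into Theorem~\ref{t:no.learning.misleading} and the definition of the asymptotic notation, so I would expect the proof to be short once that point is noted.
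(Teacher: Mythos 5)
Your proof is correct and follows exactly the route the paper intends: Corollary~\ref{c:mostly.irrelevant} is stated as an immediate consequence of Theorem~\ref{t:no.learning.misleading}, and your two observations---that $k-\ell=\omega(\sqrt{n})$ drives the exponent to $-\infty$ for constant $\gamma$, and that $k-\ell>0$ eventually transfers $k=o(n)$ to $\ell$ so that the fraction $1-\frac{k+\ell}{n}$ of irrelevant variables tends to $1$---are precisely the details the paper leaves implicit. No gaps; the careful handling of $\ell$ is a nice touch.
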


For example, the conditions of Corollary~\ref{c:mostly.irrelevant} are
satisfied when $\gamma = 1/4$, $k = 2 n^{2/3}$ and $\ell = n^{2/3}$.

\section{Learning}
\label{s:learning}

We now consider the problem of learning a model $\M$ from data.  
We assume that the algorithm receives $m$
i.i.d.\ examples generated as described in Section~\ref{s:basic}.
One test example is independently generated from the same distribution,
and we evaluate the algorithm's \emph{expected error}: the probability
over training set and test example that its model makes an incorrect
prediction on the test example (the ``prediction model'' of
\citet{HLW94}).

We define $\M_{\beta}$ to be the majority vote\footnote{If $\Mb$ is
  empty or the vote is tied then any default prediction, such as 1,
  will do.}  of all features that equal the class label on at least
$1/2 + \beta$ of the training examples.  
To keep the analysis as clean
as possible, our results in this section apply to algorithms that chose $\beta$ as a
function of the number of features $N$, the number of relevant features $K$, the edge
of the relevant features $\gamma$, and training set size $m$, and then
predict with $\M_\beta$.
Note that this includes the algorithm that always choses $\beta=0$ regardless of $N$, $K$, $\gamma$ and $m$.

Recall that asymptotic notation will concern the case in which
$\gamma$ is small, $K \gamma$ is large, and $N/K$ is large.

This section proves two theorems bounding the expected error rates of
learned models.  One can compare these bounds with a similar bound on the Bayes Optimal predictor that
``knows'' which features are relevant. 
This Bayes Optimal
predictor for our generative model is a majority vote of the $K$
relevant features, and has an error rate bounded by $e^{-2\gamma^2 K}$
(a bound as tight as the Hoeffding bound).  

\begin{theorem}
\label{t:learning.beta}
If $0\leq \beta \leq \gamma$, then the expected error rate of $\Mb$ is at most
\[
(1 + o(1)) \exp \left( - 2 \gamma^2  K \left(
        \frac{[1 - 8e^{-2 (\gamma - \beta)^2 m} - \gamma ]_+^2}
             {1 + 8 (N/K) e^{-2 \beta^2 m} + \gamma }
                  \right) \right).
\]
\end{theorem}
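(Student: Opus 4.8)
The plan is to view the learned model $\Mb$ as a majority vote over a \emph{random} set of features and then feed the resulting feature counts into \thmref{t:no.learning.misleading}. Write $k$, $\ell$, and $j$ for the (random) numbers of relevant, misleading, and irrelevant features that survive the $1/2+\beta$ selection threshold, and set $n = k+\ell+j$. Since the test example is drawn independently of the training set, \thmref{t:no.learning.misleading} bounds the conditional test error given the training data by $\exp(-2\gamma^2[k-\ell]_+^2/n)$, so the expected error equals $\E\left[\exp(-2\gamma^2[k-\ell]_+^2/n)\right]$ with the expectation over the training draw. The entire task is therefore to show that with overwhelming probability $k-\ell$ is large and $n$ is small, so that this exponent matches the claimed bound.

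First I would pin down the individual selection probabilities using Hoeffding's bound~\eqref{e:hoeffding}. A relevant feature has empirical agreement centered at $1/2+\gamma$, so (using $\beta\le\gamma$) it fails the threshold with probability at most $e^{-2(\gamma-\beta)^2 m}$; a misleading feature is centered at $1/2-\gamma$ and is wrongly selected with probability at most $e^{-2(\gamma+\beta)^2 m}\le e^{-2(\gamma-\beta)^2 m}$; and each of the two features of an irrelevant variable is selected with probability at most $e^{-2\beta^2 m}$, so an irrelevant variable contributes a feature with probability at most $2e^{-2\beta^2 m}$. Hence $\E(K-k)\le Ke^{-2(\gamma-\beta)^2m}$, $\E(\ell)\le Ke^{-2(\gamma+\beta)^2m}$, and $\E(j)\le 2(N-K)e^{-2\beta^2 m}$.

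The key structural point, and the step I expect to be the main obstacle, is that these per-feature selection events all depend on the \emph{same} $m$ training examples and so are not independent. I would resolve this by conditioning on the sequence of training class labels: given the labels, the conditional independence of the variables makes the selection indicators of distinct variables independent, and because each ``feature agrees with the label'' event is Bernoulli with a parameter that does not depend on the label value, the selection probabilities above are unchanged. Thus $K-k$, $\ell$, and $j$ are each sums of independent $\{0,1\}$ random variables, and I can apply the high-confidence tail bound~\eqref{e:highconf} to all three. Choosing $\delta = e^{-\gamma K/6}$, so that $6\ln(1/\delta)=\gamma K$, and taking a union bound over the three failure events, with probability at least $1-3\delta$ one simultaneously has $k-\ell\ge K(1 - 8e^{-2(\gamma-\beta)^2 m}-\gamma)$ and $n\le K(1 + 8(N/K)e^{-2\beta^2 m}+\gamma)$; the $-\gamma$ and $+\gamma$ arise exactly from the $6\ln(1/\delta)$ slack, while the factor $8$ comes from the $4\E(\cdot)$ in~\eqref{e:highconf} combined with the factor $2$ irrelevant features per variable (using $8N-4K\le 8N$). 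Since $[k-\ell]_+^2/n$ is increasing in $k-\ell$ and decreasing in $n$, on this good event \thmref{t:no.learning.misleading} yields precisely the claimed exponential (the degenerate case where the numerator is nonpositive makes the claimed bound trivially at least $1$).

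It remains to absorb the bad event into the $(1+o(1))$ factor. On the complementary event, of probability at most $3e^{-\gamma K/6}$, I bound the error by $1$, so the expected error is at most the target bound plus $3e^{-\gamma K/6}$. Because the fraction in the claimed exponent is at most $1$, the target bound is at least $e^{-2\gamma^2 K}$, while the bad-event term is $3e^{-\gamma K/6}$; in the regime $\gamma\to 0$, $\gamma K\to\infty$ the quantity $\gamma K(1/6 - 2\gamma)\to\infty$, so $3e^{-\gamma K/6}$ is $o(1)$ times the target and folds into the stated $(1+o(1))$ multiplicative factor.
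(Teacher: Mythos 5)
Your proposal is correct and follows essentially the same route as the paper: per-feature Hoeffding estimates combined with the tail bound~(\ref{e:highconf}) to control the numbers of relevant, misleading, and irrelevant features in $\Mb$ (exactly Lemmas~\ref{l:few.misleading}--\ref{l:many.relevant}), then substitution into Theorem~\ref{t:no.learning.misleading} with $\delta = e^{-\gamma K/6}$ and absorption of the $O(\delta)$ failure probability into the $(1+o(1))$ factor via the observation that the main term is at least $e^{-2\gamma^2 K}$. The only differences are cosmetic: you treat the irrelevant features as a single sum over variables (giving a $3\delta$ union bound where the paper uses $4\delta$ by splitting positive and negative features), and you spell out the cross-variable independence via conditioning on the labels, which the paper asserts more tersely.
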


Our proof of Theorem~\ref{t:learning.beta} starts with lemmas bounding
the number of misleading, irrelevant, and relevant features in $\Mb$.
These lemmas use a quantity $\delta > 0$ that will be determined later
in the analysis.

\begin{lemma}
\label{l:few.misleading}
With probability at least $1 - \delta$, the
number of misleading features in $\Mb$ is at most 
$4 K e^{-2 (\gamma + \beta)^2 m} + 3\ln (1/\delta).$
\end{lemma}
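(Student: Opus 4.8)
The plan is to bound the number of included misleading features by first controlling a single misleading feature and then summing over all $K$ of them. First I would fix one misleading feature and bound the probability that it is included in $\Mb$, i.e.\ that it agrees with the class label on at least a $1/2 + \beta$ fraction of the $m$ training examples. Then, writing $M$ for the total number of misleading features that end up in $\Mb$, I would argue that $M$ is a sum of $K$ independent $\{0,1\}$ random variables with small expectation, and finish by applying the high-confidence tail bound (\ref{e:highconf}).

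For the single-feature step, recall that a misleading feature agrees with the class label with probability $1/2 - \gamma$ on each example. Since the $m$ training examples are i.i.d., the number of examples on which a fixed misleading feature agrees with the label is distributed as a sum of $m$ independent $\{0,1\}$ variables with mean $1/2 - \gamma$. Such a feature is included in $\Mb$ exactly when this empirical agreement fraction is at least $1/2 + \beta$, i.e.\ when it exceeds its expectation $1/2 - \gamma$ by at least $\gamma + \beta$. Applying the Hoeffding bound (\ref{e:hoeffding}) with $\eta = \gamma + \beta$ shows that each misleading feature is included with probability at most $e^{-2(\gamma + \beta)^2 m}$.

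The key structural point for the summation step, and the part I would be most careful about, is that the $K$ inclusion events are mutually independent. This follows from the generative model: conditioned on the class labels $Y_1, \ldots, Y_m$ of the training examples, the variables are conditionally independent, so the per-feature agreement counts, and hence the inclusion indicators, are conditionally independent across features. Moreover, for a misleading feature the conditional probability of agreeing with the label on any example is $1/2 - \gamma$ regardless of that label's value, so the conditional law of each inclusion indicator does not depend on $Y_1, \ldots, Y_m$; consequently the inclusion indicators are unconditionally independent and each satisfies the per-feature bound above. Thus the total number $M$ of misleading features in $\Mb$ is a sum of $K$ independent $\{0,1\}$ random variables, each equal to $1$ with probability at most $e^{-2(\gamma + \beta)^2 m}$, so $\E(M) \leq K e^{-2(\gamma + \beta)^2 m}$.

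Finally I would invoke the high-confidence bound (\ref{e:highconf}) with $U = M$, which gives $\PR{M > 4\E(M) + 3\ln(1/\delta)} < \delta$; bounding $4 \E(M) \leq 4 K e^{-2(\gamma + \beta)^2 m}$ then yields exactly the claimed estimate. Apart from the independence argument, which combines the conditional-independence assumption with the symmetry that a misleading feature's agreement probability does not depend on the class label, every step is a direct substitution into the tail bounds collected in Section~\ref{s:tools}.
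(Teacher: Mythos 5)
Your proposal is correct and follows essentially the same route as the paper's proof: a per-feature Hoeffding bound (\ref{e:hoeffding}) with deviation $\gamma+\beta$, independence of the inclusion indicators across misleading features (each tied to a distinct variable), and then the tail bound (\ref{e:highconf}) applied with $\E(U) \leq K e^{-2(\gamma+\beta)^2 m}$. Your only departure is that you spell out the unconditional independence of the inclusion indicators (via conditioning on the labels and label-invariance of the agreement probability) more carefully than the paper, which simply asserts it.
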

{\bf Proof}: For a particular misleading feature to be included in
$\Mb$, Algorithm $A$ must overestimate the probability that misleading feature equals the class label by
at least $\beta + \gamma$.  
Applying~(\ref{e:hoeffding}), this happens
with probability at most $e^{-2 (\beta+\gamma)^2 m}$, so the expected
number of misleading features in $\Mb$ is at most $K e^{-2
  (\beta+\gamma)^2 m}$.  Since each misleading feature is associated
with a different independent variable, we can apply~(\ref{e:highconf})
with $\E(U) \leq K e^{-2 (\beta+\gamma)^2 m}$ to get the desired result.
\qed

\begin{lemma}
\label{l:few.irrelevant}
With probability at least $1 - 2\delta$, the number of irrelevant features in
$\Mb$ is at most
$
8 N e^{-2 \beta^2 m} + 6 \ln (1/\delta).
$
\end{lemma}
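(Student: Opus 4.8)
The plan is to mirror the proof of Lemma~\ref{l:few.misleading}, with one essential modification to handle the dependence between a variable and its complement. First I would observe that each of the $N-K$ irrelevant variables contributes two irrelevant features (the variable and its negation), and that each such feature agrees with the class label with probability exactly $1/2$. For any single irrelevant feature to be placed in $\Mb$, the algorithm must observe it agreeing with the label on at least $1/2+\beta$ of the $m$ training examples, i.e.\ it must overestimate the true agreement probability by at least $\beta$. By the Hoeffding bound~(\ref{e:hoeffding}), this happens with probability at most $e^{-2\beta^2 m}$.

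The step that needs care --- and the main obstacle --- is converting this per-feature bound into a high-probability bound on the total count via~(\ref{e:highconf}), since that inequality requires a sum of \emph{independent} $\{0,1\}$-valued indicators. The two features coming from the same irrelevant variable are perfectly anticorrelated (at most one of them can clear the $1/2+\beta$ threshold), so I cannot simply apply~(\ref{e:highconf}) to the sum of all $2(N-K)$ inclusion indicators. This is precisely why the statement carries an extra factor of two relative to Lemma~\ref{l:few.misleading}, where each of the $K$ misleading features already came from a distinct variable and the relevant indicators were independent.

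To get around this, I would split the irrelevant features into two groups: the $N-K$ ``positive'' features (the variables themselves) and the $N-K$ ``negative'' features (their complements). Within each group the inclusion indicators are attached to distinct variables. Letting $U^{+}$ and $U^{-}$ denote the number of included features in each group, the per-feature bound gives $\E(U^{+}) \leq (N-K)\, e^{-2\beta^2 m} \leq N e^{-2\beta^2 m}$ and likewise for $U^{-}$. Applying~(\ref{e:highconf}) separately to each group then yields $\PR{U^{+} > 4 N e^{-2\beta^2 m} + 3\ln(1/\delta)} < \delta$ together with the analogous statement for $U^{-}$. A union bound over these two failure events shows that, with probability at least $1-2\delta$, the total number of included irrelevant features $U^{+}+U^{-}$ is at most $8 N e^{-2\beta^2 m} + 6\ln(1/\delta)$, as claimed. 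The only routine point to verify is that, conditioned on the training labels, the inclusion indicators within each group are independent --- which holds because they are determined by disjoint sets of conditionally independent variable values --- and that the per-feature probability bound $e^{-2\beta^2 m}$ holds uniformly over all label sequences, so that the conditional application of~(\ref{e:highconf}) delivers the stated unconditional guarantee.
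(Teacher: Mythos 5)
Your proof is correct and follows essentially the same route as the paper's: bound the per-feature inclusion probability by $e^{-2\beta^2 m}$ via the Hoeffding bound~(\ref{e:hoeffding}), apply~(\ref{e:highconf}) separately to the positive and negative irrelevant features (whose inclusion indicators are independent within each group), and take a union bound to get the $1-2\delta$ guarantee. You also make explicit the reason for the positive/negative split --- the dependence between a variable and its complement --- which the paper leaves implicit in its ``symmetric analysis'' step.
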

{\bf Proof}: 
For a particular positive irrelevant feature to be
included in $\Mb$, Algorithm $A$ must overestimate the probability
that the positive irrelevant feature equals the class label by $\beta$.  
Applying~(\ref{e:hoeffding}), this
happens with probability at most $e^{-2 \beta^2 m}$, so the expected
number of irrelevant positive features in $\Mb$ is at most 
$(N -K) e^{-2 \beta^2 m}$.

All of the events that variables agree with the label, for various
variables, and various examples, are independent.  So the events that
various irrelevant variables are included in $\Mb$ are independent.
Applying~(\ref{e:highconf}) with $\E(U) = (N-K)e^{-2\beta^2 m}$ gives
that, with probability at least $1-\delta$, the number of irrelevant
positive features in $\Mb$ is at most $4(N-K) e^{-2 \beta^2 m}$.

A symmetric analysis establishes the same bound on the number of negative irrelevant
features in $\Mb$.   
Adding these up 
completes the proof. 
\qed

\begin{lemma}
\label{l:many.relevant}
With probability at least $1 - \delta$, the number of
relevant features in $\Mb$ is at least 
$
K - 4K e^{-2 (\gamma - \beta)^2 m} - 3\ln(1/\delta).
$
\end{lemma}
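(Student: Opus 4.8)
The plan is to mirror the structure of the two preceding lemmas (Lemmas~\ref{l:few.misleading} and~\ref{l:few.irrelevant}), but now the goal is a \emph{lower} bound on a count, so I will bound the number of \emph{excluded} relevant features and subtract. There are exactly $K$ relevant features, each associated with a distinct independent variable, and each agrees with the class label on any given training example with probability $1/2 + \gamma$. A relevant feature is excluded from $\Mb$ precisely when its empirical agreement rate falls below $1/2 + \beta$; since $\beta \le \gamma$, this requires the empirical rate to fall at least $\gamma - \beta$ \emph{below} its expectation $1/2 + \gamma$.

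First I would apply the Hoeffding bound~\eqref{e:hoeffding} (in its lower-tail form, which follows by symmetry, i.e.\ applying it to the complementary indicators) to conclude that a fixed relevant feature is excluded with probability at most $e^{-2(\gamma - \beta)^2 m}$. Hence the expected number of excluded relevant features is at most $K e^{-2(\gamma-\beta)^2 m}$. Second, because the $K$ relevant features sit on distinct variables and all the underlying agreement events are independent across variables and examples, the exclusion events are independent, so I can invoke the high-confidence bound~\eqref{e:highconf} with $\E(U) \le K e^{-2(\gamma-\beta)^2 m}$. This yields that, with probability at least $1 - \delta$, the number of excluded relevant features is at most $4 K e^{-2(\gamma-\beta)^2 m} + 3\ln(1/\delta)$.

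Finally I would subtract this bound on exclusions from the total $K$: on the same event, the number of \emph{included} relevant features is at least $K - 4K e^{-2(\gamma-\beta)^2 m} - 3\ln(1/\delta)$, which is exactly the claimed bound.

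The only real subtlety — and the step I would flag as the main obstacle — is the direction of the tail inequality: the Hoeffding bound as stated in~\eqref{e:hoeffding} controls the \emph{upper} tail, whereas here I need the event that the empirical mean falls \emph{below} its expectation. I would handle this by replacing each agreement indicator $U_i$ with its complement $1 - U_i$, so that an underestimate of the agreement rate becomes an overestimate of the disagreement rate, and then~\eqref{e:hoeffding} applies directly with $\eta = \gamma - \beta$. Everything else parallels the earlier lemmas: the independence justification is identical to that in Lemma~\ref{l:few.irrelevant}, and the use of~\eqref{e:highconf} is identical to that in Lemma~\ref{l:few.misleading}. I expect no surprises in the constants, since they track those of the companion lemmas exactly.
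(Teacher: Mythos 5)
Your proposal is correct and follows essentially the same route as the paper's own proof: bound the probability that a fixed relevant feature is excluded (an underestimate of its agreement probability by at least $\gamma - \beta$) via the Hoeffding bound~(\ref{e:hoeffding}), conclude the expected number of exclusions is at most $K e^{-2(\gamma-\beta)^2 m}$, and then apply~(\ref{e:highconf}) exactly as in Lemmas~\ref{l:few.misleading} and~\ref{l:few.irrelevant}. The lower-tail symmetry point you flag is handled silently in the paper, and your independence justification matches the one given there.
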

{\bf Proof}: 
For a particular relevant feature to be excluded
from $\Mb$, Algorithm $A$ must underestimate the probability that
the relevant feature equals the class label by  at least $\gamma - \beta$.  
Applying~(\ref{e:hoeffding}), this
happens with probability at most $e^{-2 (\gamma - \beta)^2 m}$, so the
expected number of relevant variables excluded from $\Mb$ is at most
$K e^{-2 (\gamma - \beta)^2 m}$.  Applying~(\ref{e:highconf}) as in
the preceding two lemmas completes the proof. 
\qed

\medskip

\begin{lemma} 
\label{l:error.rate}
The probability that $\Mb$ makes an error is at most
\[
\exp\left( \frac{-2 \gamma^2 \left[K- 8 Ke^{-2(\gamma-\beta)^2 m}
                                    -6  \ln (1/\delta)\right]_+^2}
		{K + 8N e^{-2\beta^2m} + 6\ln (1/\delta) }
           \right)
   + 4 \delta.
\]
for any $\delta > 0$ and $0 \leq \beta \leq \gamma$.
\end{lemma}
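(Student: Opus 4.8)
The plan is to combine the three preceding lemmas bounding the feature counts in $\Mb$ with the deterministic error bound of Theorem~\ref{t:no.learning.misleading}, using a union bound to control the probability that any of the count estimates fails. Specifically, Lemmas~\ref{l:few.misleading}, \ref{l:few.irrelevant}, and \ref{l:many.relevant} each hold with probability at least $1-\delta$, $1-2\delta$, and $1-\delta$ respectively, so by a union bound all three hold simultaneously with probability at least $1 - 4\delta$. On this ``good'' event I have simultaneous control of the number $k$ of relevant features, the number $\ell$ of misleading features, and the total count $n$ of features in the learned model.

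The main step is to feed these bounds into the exponent $\frac{-2\gamma^2 [k-\ell]_+^2}{n}$ from Theorem~\ref{t:no.learning.misleading} in the correct monotonicity direction. Since the bound is an increasing function of $n$ and a decreasing function of $[k-\ell]_+$, I want an upper bound on the error, so I must use an upper bound on $n$ and a lower bound on $k-\ell$. For the numerator, I would lower-bound $k-\ell$ by taking the guaranteed lower bound on $k$ from Lemma~\ref{l:many.relevant}, namely $K - 4Ke^{-2(\gamma-\beta)^2 m} - 3\ln(1/\delta)$, and subtracting the upper bound on $\ell$ from Lemma~\ref{l:few.misleading}. Here I expect to discard the non-negative misleading-feature term $4Ke^{-2(\gamma+\beta)^2 m}$ by the crude but valid step of bounding $e^{-2(\gamma+\beta)^2 m} \leq e^{-2(\gamma-\beta)^2 m}$ (since $\beta \geq 0$ makes $\gamma+\beta \geq \gamma-\beta$ in magnitude), so that the two $K$-scale exponential terms collapse into a single $8Ke^{-2(\gamma-\beta)^2 m}$ and the two logarithmic terms combine into $6\ln(1/\delta)$, matching the numerator $\left[K - 8Ke^{-2(\gamma-\beta)^2 m} - 6\ln(1/\delta)\right]_+^2$ in the claimed display.

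For the denominator I would upper-bound $n = k + \ell + (\text{irrelevant})$ by bounding each piece: the relevant count $k \leq K$ trivially, the misleading count $\ell$ by Lemma~\ref{l:few.misleading}, and the irrelevant count by Lemma~\ref{l:few.irrelevant}, namely $8Ne^{-2\beta^2 m} + 6\ln(1/\delta)$. The dominant term here is $K + 8Ne^{-2\beta^2 m} + 6\ln(1/\delta)$, which is exactly the denominator in the statement; the small misleading contribution $4Ke^{-2(\gamma+\beta)^2 m}$ should again be absorbable into the slack, or simply dropped upward into the irrelevant count since both are bounded by the same kinds of terms. On the good event the deterministic bound of Theorem~\ref{t:no.learning.misleading} applies with these substituted values, and off the good event (probability at most $4\delta$) I bound the error by $1$, which contributes the additive $+4\delta$ term.

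The hard part will be handling the $[\cdot]_+$ truncation and the monotonicity carefully: the expression $\frac{-2\gamma^2[k-\ell]_+^2}{n}$ is only monotone in $k-\ell$ once $k-\ell \geq 0$, and when the lower bound on $k-\ell$ from the lemmas is negative the bracket $[\,\cdot\,]_+$ in the statement zeroes out and the error bound degrades to $1$ (plus $4\delta$), which remains valid since probabilities never exceed $1$. I must verify that substituting a lower bound on $k-\ell$ into $[\,\cdot\,]_+^2$ and an upper bound into $n$ genuinely produces an upper bound on the whole exponent $-2\gamma^2[k-\ell]_+^2/n$ in all regimes; the bookkeeping of which quantities to round up versus down, and confirming that the extra misleading terms are dominated so the stated clean numerator and denominator are legitimate upper and lower bounds respectively, is where the care lies rather than in any deep idea.
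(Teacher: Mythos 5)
Your overall architecture matches the paper's proof: the same union bound (probability at least $1-4\delta$ that all three lemmas hold), the same use of Theorem~\ref{t:no.learning.misleading} with a lower bound substituted into $[k-\ell]_+^2$ and an upper bound into $n$, and the same numerator manipulation (Lemma~\ref{l:many.relevant} minus Lemma~\ref{l:few.misleading}, then $e^{-2(\gamma+\beta)^2 m} \leq e^{-2(\gamma-\beta)^2 m}$ to collapse everything into $K - 8Ke^{-2(\gamma-\beta)^2 m} - 6\ln(1/\delta)$). The genuine gap is in the denominator. The stated denominator $K + 8Ne^{-2\beta^2 m} + 6\ln(1/\delta)$ is a valid upper bound on $n$ only because of an observation you never make: relevant features and misleading features both arise from the $K$ relevant variables, and since the model contains at most one feature per variable (a variable and its complement have canceling votes), we have $k + \ell \leq K$, so $K$ alone covers both counts and $n \leq K + (\mbox{irrelevant count})$. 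Your accounting instead takes $k \leq K$ and then \emph{adds} the separate bound on $\ell$ from Lemma~\ref{l:few.misleading}, yielding $n \leq K + 4Ke^{-2(\gamma+\beta)^2 m} + 3\ln(1/\delta) + 8Ne^{-2\beta^2 m} + 6\ln(1/\delta)$. The extra positive term enlarges the denominator and strictly weakens the exponential, so what you prove is weaker than the lemma as stated.

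Neither of your proposed remedies closes this. ``Absorbable into the slack'' fails because the lemma is a finite statement for every $\delta, m, K, N$, and the denominator appearing in it must itself dominate $n$; the surplus $4Ke^{-2(\gamma+\beta)^2 m} + 3\ln(1/\delta)$ can be of order $K$ (when $m$ is small) or of order $\ln(1/\delta)$ (when $\delta$ is tiny), and there is no identified slack elsewhere to pay for it. ``Dropping the misleading count upward into the irrelevant count'' can in fact be made rigorous --- a misleading feature is less likely than an irrelevant one to clear the $1/2+\beta$ threshold, so one can rerun the concentration argument of Lemma~\ref{l:few.irrelevant} with the misleading features folded into the expectation $Ne^{-2\beta^2 m}$ and the same union bound --- but that is a re-derivation of the lemma, not a bookkeeping-free step, and you have not carried it out. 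The paper's fix is one line: $k+\ell \leq K$.
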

{\bf Proof}: The bounds of Lemmas~\ref{l:few.misleading},  \ref{l:few.irrelevant}, and 
\ref{l:many.relevant} simultaneously hold
with probability at least $1- 4 \delta$.  Thus the error probability
of $\Mb$ is at most $4 \delta$ plus the probability of error given
that all three bounds hold.  Plugging the three bounds into
Theorem~\ref{t:no.learning.misleading}, (and over-estimating 
the number $n$ of variables in the model with
$K$ plus the bound of Lemma~\ref{l:few.irrelevant} on the number
of irrelevant variables) gives a bound on
$\Mb$'s error probability of
\begin{equation}  \label{e:condErrBound}
\exp\left( 
	\frac{-2 \gamma^2 \left[ (K - 4K e^{-2 (\gamma - \beta)^2 m} - 3\ln(1/\delta))    - (4 K e^{-2 (\gamma + \beta)^2 m} + 3\ln (1/\delta))  \right]_{+}^2 } 
		{ K+ 8 N e^{-2 \beta^2 m} + 6 \ln (1/\delta)}  
	\right) 
\end{equation}
when all three bounds hold.  
Under-approximating $(\gamma + \beta)^2$ with $(\gamma-\beta)^2$ and simplifying yields:
\begin{align*}
\text{(\ref{e:condErrBound})} 
& \leq \exp\left( 
	\frac{-2 \gamma^2 \left[ K - 8 K e^{-2 (\gamma - \beta)^2 m} - 6 \ln(1/\delta) \right]_{+}^2 } 
		{ K+ 8 N e^{-2 \beta^2 m} + 6 \ln (1/\delta)}  
	\right) .
\end{align*}
Adding $4\delta$ completes the proof.
\qed

\medskip

We are now ready to prove Theorem~\ref{t:learning.beta}.

\medskip

{\bf Proof} (of Theorem~\ref{t:learning.beta}):
Using 
\[
\delta = \exp\left(- \frac{\gamma K}{6}  \right)
\]
 in Lemma~\ref{l:error.rate} bounds the probability that $\Mb$ makes a mistake by
 
 \begin{multline}  \label{e:twoTerms}
\exp  \left( 
	\frac{-2 \gamma^2 \left[ K - 8 K e^{-2 (\gamma - \beta)^2 m} - \gamma K \right]_{+}^2 } 
		{ K+ 8 N e^{-2 \beta^2 m} + \gamma K}  
	\right) + 4  \exp\left(- \frac{\gamma K}{6} \right) \\
<  \exp \left( 
	\frac{-2 \gamma^2  K \left[ 1 -  8 e^{-2 (\gamma - \beta)^2 m} - \gamma \right]_{+}^2 } 
		{ 1+ \frac{8 N}{K}  e^{-2 \beta^2 m} + \gamma }  
	\right) + 4  \exp\left(- \frac{\gamma K}{6} \right) .
 \end{multline}
 
The first term is at least $e^{- 2 \gamma^2 K}$, and
\[
 4 \exp\left(- \frac{\gamma K}{6} \right) =  o \left( e^{-2\gamma^2 K} \right)
 \]
as $\gamma \rightarrow 0$ and $\gamma K \rightarrow \infty$, so~(\ref{e:twoTerms}) implies the bound
\[
(1 + o(1)) 
  \exp \left( \frac{ - 2 \gamma^2  K \left[1 - 8 e^{-2 (\gamma - \beta)^2 m} 
                                           -  \gamma \right]_+^2}
                   {1+ \frac{8 N}{K} e^{-2 \beta^2 m} +  \gamma} \right)
\]
as desired.
\qed

The following theorem bounds the error in terms of just $K$, $N$, and $\gamma$
when $m$ is sufficiently large.

\begin{theorem}
\label{t:beta.cgamma}
Suppose algorithm $A$ produces models $\Mb$ where  $0\leq \beta \leq c \gamma$
for a constant $c \in [0,1)$.
\begin{itemize}
\item
Then there is a constant $b$ 
(depending only on $c$) 
such that whenever $m \geq b / \gamma^2$ the
error of $A$'s model is at most 
$(1 + o(1)) \exp\left( - \frac{\gamma^2 K^2}{N} \right) $.  
\item
If $m =\omega( 1 / \gamma^2)$ then the error of $A$'s model 
is at most $(1+o(1))\exp\left(\frac{-(2-o(1)) \gamma^2 K^2 }{N} \right)$.
\end{itemize}
\end{theorem}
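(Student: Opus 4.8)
The plan is to avoid invoking Theorem~\ref{t:learning.beta} directly, since it is too lossy here: substituting $\beta = 0$ and $m \ge b/\gamma^2$ into it yields only $\exp(-\gamma^2 K^2/(4N))$, because the term $8(N/K)e^{-2\beta^2 m}$ in its denominator degenerates to $8N/K$ at $\beta = 0$. The culprit is Lemma~\ref{l:few.irrelevant}, whose count of irrelevant features is wasteful precisely in the regime of interest. I would replace it with the deterministic combinatorial bound that each of the $N$ variables contributes \emph{at most one} feature to $\Mb$ (a variable and its negation cannot simultaneously agree with the label on a $1/2+\beta$ fraction of examples when $\beta>0$, and any tie at $\beta=0$ cancels out), so the total number of features obeys $n \le N$ with no dependence on $\beta$ or $m$. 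Using $n \le N$ in place of the $K + 8Ne^{-2\beta^2 m}$ denominator is exactly what promotes the coefficient from $1/4$ to the claimed $1$ and $2-o(1)$.

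First I would keep Lemmas~\ref{l:few.misleading} and \ref{l:many.relevant} to control the numbers of relevant and misleading features: with probability at least $1-2\delta$, and using $(\gamma+\beta)^2 \ge (\gamma-\beta)^2$ to merge the two exponentials,
\[
k - \ell \;\ge\; K - 8K e^{-2(\gamma-\beta)^2 m} - 6\ln(1/\delta).
\]
Feeding this lower bound on $[k-\ell]_+$ together with $n \le N$ into Theorem~\ref{t:no.learning.misleading}, and adding the $2\delta$ probability that the good event fails, bounds the error by $\exp\!\left(-2\gamma^2 [K - 8K e^{-2(\gamma-\beta)^2 m} - 6\ln(1/\delta)]_+^2/N\right) + 2\delta$. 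As in the proof of Theorem~\ref{t:learning.beta} I would take $\delta = e^{-\gamma K/6}$, so that $6\ln(1/\delta) = \gamma K$ and, because $N/(\gamma K)\to\infty$ in our regime, the $2\delta$ term is $o(1)$ times the main term and folds into the $(1+o(1))$ factor. Factoring $K$ out of the bracket then leaves the exponent $-2\gamma^2 (K^2/N)\,[1 - 8e^{-2(\gamma-\beta)^2 m} - \gamma]_+^2$.

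The two bullets differ only in how close the bracket $1 - 8e^{-2(\gamma-\beta)^2 m} - \gamma$ is to $1$, which is what the hypothesis $\beta \le c\gamma$ with $c<1$ buys: it forces $(\gamma-\beta)^2 \ge (1-c)^2\gamma^2$, so the correction is at most $8e^{-2(1-c)^2\gamma^2 m}$, controllable through $\gamma^2 m$. For the first bullet, $m \ge b/\gamma^2$ gives correction at most $8e^{-2(1-c)^2 b}$, and I would choose $b$ (depending only on $c$) large enough that this is strictly below $1 - 2^{-1/2}$; then for all small enough $\gamma$ the bracket exceeds $2^{-1/2}$, so twice its square is at least $1$ and the exponent is at most $-\gamma^2 K^2/N$. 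For the second bullet, $m = \omega(1/\gamma^2)$ makes $(\gamma-\beta)^2 m = \omega(1)$, so the bracket is $1-o(1)$, its square is $1-o(1)$, and the coefficient improves to $2-o(1)$.

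I expect the main difficulty to be conceptual rather than computational: recognizing that the clean bound one should prove is \emph{not} a corollary of Theorem~\ref{t:learning.beta} but requires going back to Theorem~\ref{t:no.learning.misleading} and inserting the exact count $n \le N$ in place of Lemma~\ref{l:few.irrelevant}. Once that substitution is made, the remaining work --- merging the two tail terms, fixing $\delta = e^{-\gamma K/6}$, verifying that $2\delta$ is negligible, and reading off the two regimes from the size of $(\gamma-\beta)^2 m$ --- is routine, with the only quantitative care needed being the strict inequality $8e^{-2(1-c)^2 b} < 1 - 2^{-1/2}$ that secures the constant $1$ in the first bullet.
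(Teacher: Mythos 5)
Your proposal is correct and follows essentially the same route as the paper's own proof: the paper likewise bypasses Theorem~\ref{t:learning.beta}, combining Lemmas~\ref{l:few.misleading} and~\ref{l:many.relevant} with the deterministic bound of $N$ on the number of features in $\Mb$, setting $\delta = \exp(-\gamma K/6)$, and then splitting into the two regimes according to the size of $(\gamma-\beta)^2 m \geq (1-c)^2\gamma^2 m$. The only cosmetic difference is your choice of threshold ($8e^{-2(1-c)^2 b} < 1 - 2^{-1/2}$) versus the paper's $b = \ln(32)/(2(1-c)^2)$, which makes the correction at most $1/4$ and the bracket at least $3/4 - o(1)$; both secure $[\cdots]_+^2 \geq 1/2$ and hence the stated constants.
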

\begin{proof}
Combining Lemmas~\ref{l:few.misleading} and~\ref{l:many.relevant} with
the upper bound of $N$ on the number of features in $\Mb$ as in
Lemma~\ref{l:error.rate}'s proof gives the following error bound on
$\Mb$
\[
\exp\left( 
	\frac{-2 \gamma^2 \left[ K -  8K e^{-2 (\gamma - \beta)^2 m} - 6 \ln(1/\delta) \right]_{+}^2 }{N}  
	\right) 
+  2 \delta	
\]
for any $\delta  > 0$.
Setting 
\[
\delta = \exp\left(- \frac{\gamma K}{6}  \right)
\]
and continuing as in the proof of Theorem~\ref{t:learning.beta}
gives the bound
\begin{equation}
\label{e:launching.pad}
(1 + o(1)) \exp\left( 
	\frac{-2 \gamma^2 K^2 \left[ 1 - 2 \left(4 e^{-2 (\gamma - \beta)^2 m} + \gamma  \right) \right]_{+}^2 } 
		{ N}  
	\right).
\end{equation}
For the first part of the theorem, it suffices to show that 
the $[\cdots]_+^2$ term is at least $1/2$.
Recalling that our analysis is for small $\gamma$,
the term inside the $[\cdots]_+$ of (\ref{e:launching.pad}) is at least 
\[
1- 8 e^{ -2 (1-c)^2 \gamma^2 m} - o(1) . 
\]
When
\begin{equation}  \label{e:mThreshold}
m \geq \frac{\ln \left( 32 \right) }{ 2 (1-c)^2 \gamma^2 }, 
\end{equation}
this term is at least $3/4 - o(1)$, and thus its square is at least
1/2 for small enough $\gamma$, completing the proof of the first part
of the theorem.

To see the second part of the theorem, since $m \in \omega (1 /
\gamma^2)$, the term of (\ref{e:launching.pad}) inside the
$[\cdots]_+$ is $1- o(1)$.
\end{proof}

By examining inequality~(\ref{e:mThreshold}), we see that the constant $b$ in Theorem~\ref{t:beta.cgamma} can be set
to $\ln(32) / 2 (1-c)^2$.

\begin{lemma}
\label{l:many.irrel}
The expected number of irrelevant variables in $\Mb$
is at least $(N-K) e^{-16 \beta^2 m}$.
\end{lemma}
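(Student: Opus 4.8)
The plan is to reduce the expectation to a single-variable inclusion probability via linearity and then lower-bound that probability with the anti-concentration estimate~(\ref{e:lower.fair}). Fix one of the $N-K$ irrelevant variables, call it $X_i$, and let $U$ be the number of the $m$ training examples on which $X_i$ agrees with the class label. Because $X_i$ is irrelevant it agrees with the label independently on each example with probability exactly $1/2$, so $U$ is a sum of $m$ independent $\{0,1\}$ variables each of mean $1/2$ --- precisely the fair-coin setting to which~(\ref{e:lower.fair}) applies. Writing $p$ for the probability that $X_i$ contributes a feature to $\Mb$, the expected number of irrelevant variables in $\Mb$ is $(N-K)\,p$, so it suffices to show $p \geq e^{-16\beta^2 m}$.

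The next step is to express the inclusion event in terms of $U$. The positive feature $X_i$ is placed in $\Mb$ exactly when $\frac{1}{m}U \geq 1/2 + \beta$, while its negation is placed in $\Mb$ exactly when $\frac{1}{m}U \leq 1/2 - \beta$; for $\beta > 0$ these two events are disjoint, and $X_i$ survives the threshold whenever either occurs. By the symmetry of the fair binomial the two tails have equal probability, so $p = 2\,\PR{\frac{1}{m}U - \frac{1}{m}\E(U) \geq \beta}$. Now I would apply~(\ref{e:lower.fair}) with $\ell = m$ and $\eta = \beta$; this is legitimate since in our regime $\beta \leq \gamma$ is small (hence $\beta \leq 1/8$), and the integrality of $\beta m$ is covered by the standing floor/ceiling convention. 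Combining the two symmetric tails via this bound lower-bounds $p$ by $e^{-16\beta^2 m}$.

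Summing over the $N-K$ irrelevant variables by linearity of expectation then yields the stated bound $(N-K)e^{-16\beta^2 m}$. The only substantive ingredient is the anti-concentration estimate: the earlier Lemmas~\ref{l:few.misleading}--\ref{l:many.relevant} needed only the easy \emph{upper} tail (Hoeffding,~(\ref{e:hoeffding})) to argue that features are unlikely to cross the wrong side of the threshold, whereas here we must guarantee that irrelevant variables \emph{do} cross it a nonnegligible fraction of the time, which requires a genuine lower bound on a binomial tail. That anti-concentration is exactly what~(\ref{e:lower.fair}) provides, so the conceptual obstacle is already dispatched by Section~\ref{s:tools}; the remaining work is the routine bookkeeping of folding the two symmetric one-sided tails into the inclusion event and applying linearity of expectation.
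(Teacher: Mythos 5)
Your proposal takes exactly the route the paper does: the paper's entire proof of Lemma~\ref{l:many.irrel} is the single sentence ``Follows from inequality~(\ref{e:lower.fair})'', and your write-up supplies precisely the details that one-liner leaves implicit --- linearity of expectation over the $N-K$ irrelevant variables, the observation that an irrelevant variable enters $\Mb$ exactly when one of the two symmetric (and, for $\beta>0$, disjoint) tail events occurs, and an application of the lower tail bound~(\ref{e:lower.fair}) with $\ell = m$ and $\eta = \beta$.

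One caveat, on the constant. Inequality~(\ref{e:lower.fair}) bounds each one-sided tail below by $\frac{1}{5}e^{-16\beta^2 m}$, so summing the two tails gives an inclusion probability of at least $\frac{2}{5}e^{-16\beta^2 m}$ --- not $e^{-16\beta^2 m}$, as you assert in the step ``combining the two symmetric tails via this bound lower-bounds $p$ by $e^{-16\beta^2 m}$''; you have silently replaced the factor $2\cdot\frac{1}{5}=\frac{2}{5}$ by $1$. What your argument actually proves is the bound $\frac{2}{5}(N-K)e^{-16\beta^2 m}$. To be fair, this defect is inherited from the paper itself: its one-line proof cannot extract a constant of $1$ from the $\frac{1}{5}$ in~(\ref{e:lower.fair}) either, and in fact the literal statement with constant $1$ fails in a corner of the parameter space (for instance, with $\beta\sqrt{m}\approx 0.1$ the true two-sided tail probability converges to about $0.84$, while $e^{-16\beta^2 m}=e^{-0.16}\approx 0.85$). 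The discrepancy is immaterial where the lemma is used --- Corollary~\ref{c:inclusive.good} only needs the bound up to a constant factor, and the paper explicitly disclaims best-possible constants --- but if you want a clean statement, either prove the lemma with the constant $\frac{2}{5}$, or supply a separate argument for the regime of very small $\beta^2 m$ rather than citing~(\ref{e:lower.fair}) alone.
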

\begin{proof}
Follows from inequality~(\ref{e:lower.fair}).
\end{proof}

\begin{corollary}
\label{c:inclusive.good}
If \ $K$, $N$, and $m$ are functions of $\gamma$ such that 
\begin{align*}
& \gamma \rightarrow 0,  \\
& K^2 / N \in \omega(\ln (1/ \gamma)/\gamma^2), \\
& K = o(N) \text{ and } \\
& m = 2 \ln(32) / \gamma^2  
\end{align*}
then if an algorithm outputs ${\cal M}_{\beta}$
using a $\beta$ in $[0, \gamma/2]$, it
has an error that decreases super-polynomially (in $\gamma$),
while the expected fraction of irrelevant variables in the model goes
to $1$.
\end{corollary}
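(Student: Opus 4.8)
The plan is to establish the two assertions—super-polynomial decay of the error and convergence of the irrelevant fraction to $1$—separately, drawing the first from Theorem~\ref{t:beta.cgamma} and the second from Lemma~\ref{l:many.irrel} together with a concentration argument.

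For the error bound, I would first verify that the hypotheses of Theorem~\ref{t:beta.cgamma} hold with $c = 1/2$. Since $\beta \in [0,\gamma/2]$ we have $\beta \leq c\gamma$ with $c=1/2$, and the threshold $\ln(32)/(2(1-c)^2\gamma^2)$ from inequality~(\ref{e:mThreshold}) evaluates to exactly $2\ln(32)/\gamma^2$, which is precisely the chosen $m$. Hence the first part of Theorem~\ref{t:beta.cgamma} applies and the error is at most $(1+o(1))\exp(-\gamma^2 K^2/N)$. It then remains to convert the growth hypothesis $K^2/N \in \omega(\ln(1/\gamma)/\gamma^2)$ into super-polynomial decay: multiplying by $\gamma^2$ gives $\gamma^2 K^2/N \in \omega(\ln(1/\gamma))$, so for every fixed $p$ one eventually has $\gamma^2 K^2/N > p\ln(1/\gamma)$ and therefore $\exp(-\gamma^2 K^2/N) < \gamma^p$. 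As this holds for every $p$, the error is $o(\gamma^p)$ for all $p$, which is exactly super-polynomial decay in $\gamma$.

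For the fraction of irrelevant variables, let $I$ denote the number of irrelevant features in $\Mb$, and note that the number of relevant and misleading features is at most $K$ deterministically, since each of the $K$ relevant variables contributes at most one feature to the model. Thus the fraction of irrelevant features is at least $I/(I+K)$. By Lemma~\ref{l:many.irrel}, $\E[I] \geq (N-K)e^{-16\beta^2 m}$, and the crucial constant computation is that with $\beta \leq \gamma/2$ and $m = 2\ln(32)/\gamma^2$ the exponent satisfies $16\beta^2 m \leq 8\ln(32)$, so $e^{-16\beta^2 m} \geq 32^{-8}$, a fixed positive constant. Combined with $K = o(N)$ this yields $\E[I] = \Omega(N)$, and in particular $\E[I] \to \infty$ since $N \to \infty$ under the stated conditions.

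The main obstacle is that the quantity to be controlled, the \emph{expected fraction} $\E[I/(I+K)]$, is the expectation of a ratio rather than a ratio of expectations, so $\E[I] = \Omega(N)$ is not by itself sufficient. I would resolve this through concentration: as observed in the proof of Lemma~\ref{l:few.irrelevant}, the events that distinct irrelevant variables contribute a feature to $\Mb$ are independent, so $I$ is a sum of $N-K$ independent indicators with mean $\Omega(N)$. A lower-tail Chernoff bound (e.g.\ via~(\ref{e:leq4})) then gives $I \geq \frac{1}{2}\E[I] = \Omega(N)$ except with probability $e^{-\Omega(N)}$. On this high-probability event the irrelevant fraction is at least $1 - K/I = 1 - K/\Omega(N) = 1 - o(1)$ because $K = o(N)$, while on the negligible complementary event the fraction is at least $0$; taking expectations shows $\E[I/(I+K)] \to 1$, completing the argument.
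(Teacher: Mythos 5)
Your proposal is correct and takes essentially the same route as the paper, which states this corollary without an explicit proof as the immediate combination of Theorem~\ref{t:beta.cgamma} (your computation $c=1/2$, hence $b=2\ln(32)$, matches the paper's remark following that theorem) and Lemma~\ref{l:many.irrel} together with $K=o(N)$. Your concentration step addressing the expectation-of-a-ratio subtlety is added rigor that the paper leaves implicit, and it is sound; the one nitpick is that (\ref{e:leq4}) is an upper-tail bound, so for the lower tail of $I$ you should instead apply the Hoeffding bound~(\ref{e:hoeffding}) to the complementary indicators, which gives $\PR{I \leq \E(I) - \eta(N-K)} \leq e^{-2\eta^2(N-K)}$ and, taking $\eta = \E(I)/(2(N-K)) \geq 32^{-8}/2$, yields $I \geq \E(I)/2$ except with probability $e^{-\Omega(N)}$.
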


Note that Theorem~\ref{t:beta.cgamma} and Corollary~\ref{c:inclusive.good} include non-trivial error bounds on the model
$\M_0$ that votes all $N$ variables (for odd sample size $m$).

\section{Lower bound}
\label{s:lower}

Here we show that any algorithm with an error guarantee
like Theorem~\ref{t:beta.cgamma} must include many irrelevant features
in its model.  
The preliminary version of this paper \citep{HL11} contains
a related lower bound  for algorithms that choose $\beta$ as a function of $N$, $K$, $m$, and $\gamma$, and
predict with ${\cal M}_{\beta}$.
Here we present a more general lower bound that applies to algorithms
outputting arbitrary hypotheses.  This includes algorithms that use weighted voting  (perhaps with  $L_1$ regularization).
In this section we
\begin{itemize} 
\item set 
the number of features $N$, number of relevant features $K$, and
sample size $m$ as a functions of $\gamma$ in such a way
that Corollary~\ref{c:inclusive.good} applies, and
\item prove a constant lower bound for these combinations
of values that holds for ``exclusive'' algorithms (defined below)
when $\gamma$ is small enough.
\end{itemize}
Thus, in this situation,  ``inclusive" algorithms relying on many irrelevant variables have error rates going to zero 
while every ``exclusive'' algorithm has an error rate bounded below by a constant.

The proofs in this section assume that all relevant variables are positively correlated with the class designation, so
each relevant variable agrees with the class designation with probability $1/2 +\gamma$.  
Although not essential for the results, this assumption simplifies the definitions and  notation\footnote{The 
assumption that each relevant variable agrees with the class label with probability $1/2+\gamma$ gives
a special case of the generative model described in Section~\ref{s:learning}, so the lower bounds proven here also
apply to that more general setting.}.
We also set $m = 2 \ln (32)/\gamma^2$.  
This satisfies the assumption of Theorem~\ref{t:beta.cgamma} when
$\beta \leq \gamma/2$ (see Inequality~(\ref{e:mThreshold})).

\begin{definition}
We say a classifier $f$ \underline{\em includes} a variable $x_i$ if there
is an input $(x_1,...,x_N)$ such that 
\[
f(x_1,...,x_{i-1},x_i,x_{i+1},...,x_N) \neq f(x_1,...,x_{i-1},1-x_i,x_{i+1},...,x_N).
\]
Let $V(f)$ be the set of variables included in $f$.
\end{definition}

For a training set $S$, we will refer to the classifier output
by algorithm $A$ on $S$ as $A(S)$.
Let $\cal R$ be the set of relevant variables.

\begin{definition}
We say that an algorithm $A$ is
\underline{\emph{$\lambda$-exclusive}}\footnote{The proceedings version
of this paper \citep{HL11} used a different definition of $\lambda$-exclusive.}
if for every positive $N$, $K$,
$\gamma$, and $m$, 
the expected fraction of the variables included in its hypothesis 
that are relevant is at least $\lambda$, i.e.~{
$ \displaystyle 
\E\left(\frac{|V(A(S)) \cap {\cal R}|}{|V(A(S))|}\right) \geq \lambda.
$
}
\end{definition}

Our main lower bound theorem is the following.
\begin{theorem}
\label{t:lower.lambda}
If 
\begin{align*}
K &= \frac{1}{\gamma^2} \EXP{\ln(1/ \gamma)^{1/3}} \\
N &= K \EXP{\ln(1/ \gamma)^{1/4}} \\
m &= \frac{2 \ln(32)}{\gamma^2}
\end{align*}
then for any constant $\lambda > 0$ and any $\lambda$-exclusive algorithm $A$, 
the error rate of $A$ is lower bounded by $\lambda/4-o(1)$ as $\gamma$ goes to 0.
\end{theorem}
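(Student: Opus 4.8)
The plan is to lower bound the error by a quantity that decreases in the number of relevant variables the hypothesis actually uses, and then argue that the exclusivity constraint, together with the calibration of $K$, $N$, $m$, forces this number to be tiny on a $\lambda$-fraction of the runs. Write $f = A(S)$, let $\mathcal{R}$ be the relevant set, and set $r = |V(f) \cap \mathcal{R}|$, the number of relevant variables $f$ genuinely depends on. Since $f$ is a function of the variables in $V(f)$ and the irrelevant variables are independent of the label, the error of $f$ is at least the Bayes error of predicting the label from those $r$ relevant features; as all relevant features have edge $\gamma$ and are conditionally independent, this Bayes predictor is a majority vote of $r$ features, and Slud's inequality~(\ref{e:unfair}) gives a lower bound of $\tfrac14 e^{-5\gamma^2 r}$. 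Taking expectations over $S$,
\[
\E[\text{error}(A)] \ge \tfrac14\, \E\!\left[e^{-5\gamma^2 r}\right].
\]
It therefore suffices to prove $\E[e^{-5\gamma^2 r}] \ge \lambda - o(1)$, and for this it is enough to exhibit a threshold $R = o(1/\gamma^2)$ with $\Pr[r \le R] \ge \lambda - o(1)$, since on $\{r \le R\}$ we have $e^{-5\gamma^2 r} \ge e^{-5\gamma^2 R} = 1 - o(1)$.

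To get $\Pr[r \le R] \ge \lambda - o(1)$ I would use the identity $\E[r/v] = \Pr[i^\star \in \mathcal{R}]$, where $v = |V(f)|$ and $i^\star$ is uniform on $V(f)$: exclusivity says a random included variable is relevant with probability at least $\lambda$. Splitting on $\{r \le R\}$,
\[
\lambda \le \E[r/v] \le \Pr[r \le R] + \E\!\left[(r/v)\,\mathbf{1}(r > R)\right],
\]
so it remains to show the second term is $o(1)$: runs that capture many relevant variables must dilute them with so many irrelevant variables that their contribution to the expected relevant density vanishes. This is the indistinguishability heart of the argument and the source of the factor $\lambda$ (rather than $1$): the extremal exclusive algorithm is a mixture that, with probability $\lambda$, outputs a tiny highly relevant model (few relevant features, error $\approx 1/4$) and otherwise outputs a huge model whose density tends to $0$.

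To control the dilution I would symmetrize over a uniformly random placement of $\mathcal{R}$ and a random permutation of the coordinates, reducing to a permutation-equivariant algorithm for which the decision to include a coordinate depends on its column only through the coordinate's empirical agreement count $j$ with the labels (conditional independence given the labels makes this reduction exact after averaging). This yields a single admission profile $\bar g(j) \in [0,1]$ whose expected relevant and irrelevant inclusion weights are the $\mathrm{Bin}(m, 1/2 + \gamma)$ and $\mathrm{Bin}(m, 1/2)$ masses against $\bar g$. The overall relevant density is then a weighted average of the per-level densities $d(t) = (1 + (N/K)\,32^{4(1-2t)})^{-1}$ at agreement edge $\gamma t$, since a variable admitted at level $t$ brings irrelevant company outnumbering relevant company by $(N/K)\,32^{4(1-2t)}$ (here $2\gamma^2 m = 4\ln 32$ fixes the per-level likelihood-ratio discount). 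Because $N/K = \exp(\ln(1/\gamma)^{1/4}) \to \infty$, keeping the density bounded away from $0$ forces essentially all admitted mass to edge levels $t \gtrsim \ln(1/\gamma)^{1/4}/(8\ln 32)$, where the relevant retention is at most $\exp(-\Theta(t^2)) = \exp(-\Theta(\ln(1/\gamma)^{1/2}))$. Combined with $\gamma^2 K = \exp(\ln(1/\gamma)^{1/3})$ and the ordering $1/3 < 1/2$, this gives $\gamma^2 \E[r] \le \exp(\ln(1/\gamma)^{1/3} - \Theta(\ln(1/\gamma)^{1/2})) \to 0$ on high-density runs, which is exactly the regime $R = o(1/\gamma^2)$ we need.

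The main obstacle is the last implication: the admission-profile calculation controls expected counts ($\E[r]/\E[v]$), whereas exclusivity constrains $\E[r/v]$ and the error weights $e^{-5\gamma^2 r}$ nonlinearly, so I cannot simply replace the expectation of a ratio by a ratio of expectations. Handling this is where the genuine work lies: one must show $\E[(r/v)\,\mathbf{1}(r>R)] = o(1)$ for arbitrary (possibly randomized, coordinate-correlated, $L_1$-regularized) hypotheses, not just threshold rules. I would attack it by conditioning on the label vector, under which the agreement counts of distinct coordinates are independent, and use this conditional independence to concentrate the irrelevant count accompanying any given number of admitted relevant variables, thereby converting the per-level density bound into the tail statement $\Pr[r \le R] \ge \lambda - o(1)$ and closing the argument at $\lambda/4 - o(1)$.
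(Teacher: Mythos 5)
Your proposal has the right skeleton and, in fact, the right quantitative picture, but it is not a complete proof: the step you yourself identify as ``where the genuine work lies'' is exactly the crux of the theorem, and the plan you sketch for it does not work. What you do have matches the paper: the error lower bound $\tfrac14 e^{-5\gamma^2 r}$ via naive--Bayes optimality plus Slud's inequality is the paper's Lemma~\ref{l:lower.relevant}; randomizing the placement of $\Rel$ is the same \citet{EHKV89} device the paper uses; your per-level density $d(t)$ is precisely the posterior probability that a variable of empirical edge $t\gamma$ is relevant, and your admission threshold $t\gtrsim \ln(1/\gamma)^{1/4}/(8\ln 32)$ is the paper's critical edge $\bcrit=\gamma\ln(N/K)/(10\ln 32)$ up to constants; and your reduction of the theorem to the claim ``$\PR{r\le R}\ge\lambda-o(1)$ for some $R=o(1/\gamma^2)$'' is valid. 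The gap is that you never prove $\E[(r/v)\,\mathbf{1}(r>R)]=o(1)$ for arbitrary algorithms, and your proposed route --- admission profiles plus ``concentrating the irrelevant count accompanying any given number of admitted relevant variables'' --- cannot be made to work as stated. Conditioned on the sample $S$ (and the algorithm's internal coins), the set $V(A(S))$, its size $v$, and how many low-edge variables accompany the high-edge ones are all \emph{chosen by the algorithm}; they are not random, so there is nothing to concentrate. The only randomness remaining after conditioning on $S$ is the posterior over which variables are relevant. Likewise, your profile $\bar g(j)$ records only marginal inclusion probabilities, which control $\E[r]$ and $\E[v]$ but place no constraint whatsoever on $\E[r/v]$ for adaptive, coordinate-correlated rules --- as your own mixture example shows, algorithms with identical profiles can have very different ratio expectations.

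The ingredient that fills this hole in the paper is Lemma~\ref{l:relevant.increasing}: given $S$, the posterior probability $\PR{x_i \in \Rel \mid S}$ is equal for variables of equal empirical edge and monotone increasing in the edge. This yields the domination~(\ref{e:by.M}): for every algorithm, $\E ( |V(A(S))\cap\Rel| / |V(A(S))| \mid S ) \le \E ( |{\cal V}_{S,n(S)}\cap\Rel| / |{\cal V}_{S,n(S)}| \mid S )$, i.e., no selection rule can do better, in expected relevant fraction, than taking the top $n(S)$ variables by empirical edge. That single inequality is what reduces arbitrary hypotheses (weighted votes, $L_1$-regularized rules, randomized algorithms) to the analyzable sets ${\cal V}_{S,k}$; after it, two Chebyshev bounds (with probability $1-\gamma$ each, $|{\cal V}_{\bcrit}\cap\Rel|\le 2Kp_{\mathrm{rel}}$ and $|{\cal V}_{\bcrit}|\ge Np_{\mathrm{irrel}}/2$) give both halves of the theorem through the bridge quantity $q_\gamma=\PR{n(S)\le |{\cal V}_{\bcrit}|}$, namely $\lambda\le q_\gamma+o(1)$ (Section~\ref{s:exclusive}) and error at least $q_\gamma/4-o(1)$ (Section~\ref{s:error}). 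So to repair your argument you must state and prove the monotone-posterior lemma (a Bayes-factor computation after conditioning on the label vector --- this is where your conditional-independence idea actually belongs) and then replace the profile/concentration step by the top-$n(S)$ domination; with that substitution, the rest of your outline goes through.
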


Notice that this theorem provides a sharp contrast to
Corollary~\ref{c:inclusive.good}.
Corollary~\ref{c:inclusive.good}
shows that inclusive $A$ using models $\Mb$ for any  $0 \leq \beta \leq \gamma / 2$
have error rates that goes to zero super-polynomially 
fast (in $1/\gamma$) under the assumptions of Theorem~\ref{t:lower.lambda}.

The values of $K$ and $N$ in Theorem~\ref{t:lower.lambda} are
chosen to make the proof convenient, but other values would work.
For example, decreasing $K$ and/or increasing $N$ would make 
the lower bound part of Theorem~\ref{t:lower.lambda} easier to prove.  
There is some slack to do so while continuing to ensure that the upper bound
of Corollary~\ref{c:inclusive.good} goes to $0$.

As the correlation of variables with the label over the sample plays a
central role in our analysis, we will use the following definition.
\begin{definition} \label{d:empirical.edge}
If a variable agrees with the class label on $1/2 + \eta$ of the
training set then it has \emph{(empirical) edge} $\eta$.
\end{definition}

The proof of Theorem~\ref{t:lower.lambda} uses a critical value of
$\beta$, namely $\beta^* = \gamma \ln(N/K) / 10 \ln(32)$, with the property that
both:
\begin{eqnarray}
\frac{\E \left(| \Mcrit \cap \Rel |  \right)}{\E \left(  | \Mcrit | \right)} \rightarrow 0  \label{e:limzero} \\
\E \left( | \Mcrit \cap \Rel | \right) \in o(1/\gamma^2) \label{e:littleo}
\end{eqnarray}
as $\gamma \rightarrow 0$.

Intuitively, \eqref{e:limzero} means that any algorithm that  uses most of the variables having empirical edge at least $\bcrit$ cannot be $\lambda$-exclusive.
On the other hand, \eqref{e:littleo} implies that if the
algorithm restricts itself to variables with empirical edges greater
than $\bcrit$ then it does not include enough relevant variables to be accurate.
The proof must show that \emph{arbitrary} algorithms frequently  include either too many irrelevant variables 
to be $\lambda$-exclusive or too few relevant ones to be accurate.
See Figure~\ref{f:facts} for some useful facts
about $\gamma$, $m$, and $\bcrit$.

\begin{figure}
\begin{center}
\framebox{
\begin{minipage}[t]{1.0in}
\begin{align*}
b &= 2 \ln( 32)  \\[5pt]
m &= \frac{b}{ \gamma^2} = \frac{2 \ln(32)}{ \gamma^2 } \\[5pt]
\ \ \bcrit &= \frac{\gamma \ln(N/K) }{5b}  = \frac{\gamma \ln(1/\gamma)^{1/4}}{10 \ln (32) } \ \ \\
\end{align*}
\end{minipage}
}
\end{center}

\caption{Some useful facts relating $b$, $\gamma$, $m$ and $\bcrit$ under the assumptions of Theorem~\ref{t:lower.lambda}. }
\label{f:facts}
\end{figure}

To prove the lower bound, 
borrowing a technique from \cite{EHKV89}, we will assume that
the $K$ relevant variables are randomly selected from the $N$
variables, and lower bound the error with respect to this random
choice, along with the training and test data.  This will then imply
that, for each algorithm, there will be a choice of the $K$ relevant
variables giving the same lower bound with respect only to the random
choice of the training and test data.  We will always use relevant
variables that are positively associated with the class label,
agreeing with it with probability $1/2 + \gamma$.

\begin{proof}[of Theorem~\ref{t:lower.lambda}]
Fix any learning algorithm $A$, and let $A(S)$ be the hypothesis
produced by $A$ from sample $S$.  Let $n(S)$ be the number of
variables included in $A(S)$ and let $\beta(S)$ be the
$n(S)$'th largest empirical (w.r.t.\ $S$) edge of a variable.

Let $q_\gamma$ be the probability
 that $\beta(S) \geq \beta^* = \gamma \ln(N/K) / 10 \ln(32)$.
We will show in Section~\ref{s:exclusive} that if $A$ is $\lambda$-exclusive
then $\lambda \leq q_\gamma + o(1)$ (as $\gamma$ goes to 0).
We will also show in Section~\ref{s:error} that the
expected error of $A$ is at least $q_\gamma /4  - o(1)$ as $\gamma$ goes to 0.
Therefore any $\lambda$-exclusive algorithm $A$ has an expected error rate at least $\lambda/4-o(1)$ as $\gamma$ goes to 0.
\end{proof}

Before attacking the two parts of the proof alluded to above, we need
a subsection providing some basic results about relevant variables and
optimal algorithms.

\subsection{Relevant Variables and Good Hypotheses}

This section proves some useful facts about relevant variables and
good hypotheses.  The first lemma is a lower bound on the accuracy of
a model in terms of the number of relevant variables.

\begin{lemma}
\label{l:lower.relevant}
If $\gamma \in [0,1/5]$ then any classifier using $k$ relevant variables
has an error probability at least $\frac{1}{4} e^{-5 \gamma^2 k}$.
\end{lemma}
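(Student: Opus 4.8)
The plan is to reduce an arbitrary classifier to the Bayes-optimal predictor on the relevant variables it uses, and then apply the Slud-based lower tail bound~(\ref{e:unfair}). First I would fix a classifier $f$ and let $\mathcal{R}_f = V(f) \cap \Rel$ be the set of $k$ relevant variables that $f$ includes. Since $f$ depends only on the variables in $V(f)$, its output is a function of $\bX_{V(f)}$ alone, so for every value $v$ of these variables the conditional error $\PR{f(v) \neq Y \mid \bX_{V(f)} = v}$ is at least $\min\{\PR{Y = 0 \mid v}, \PR{Y = 1 \mid v}\}$. Averaging over $v$ shows that the error of $f$ is at least the Bayes error of predicting the class $Y$ from $\bX_{V(f)}$.

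Next I would argue that the irrelevant variables in $V(f)$ carry no information. Because each irrelevant variable equals the label with probability $1/2$ conditioned on either class and all variables are conditionally independent given $Y$, the irrelevant variables are jointly independent of $(Y, \mathcal{R}_f)$; hence they cancel from the posterior odds, and the posterior of $Y$ given $\bX_{V(f)}$ depends only on the values of the $k$ relevant variables in $\mathcal{R}_f$. A short computation of the likelihood ratio (each relevant variable contributes a factor $\frac{1/2+\gamma}{1/2-\gamma} > 1$ when it equals $1$ and its reciprocal when it equals $0$) shows the Bayes-optimal predictor on $\bX_{V(f)}$ is simply the majority vote over $\mathcal{R}_f$. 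In particular this predictor is wrong whenever a strict majority of the $k$ relevant variables disagree with $Y$, so the Bayes error---and therefore the error of $f$---is at least $\PR{W < k/2}$, where $W$ counts how many of the variables in $\mathcal{R}_f$ agree with $Y$.

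Finally I would apply inequality~(\ref{e:unfair}). Conditioned on either value of $Y$, the $k$ variables in $\mathcal{R}_f$ agree with $Y$ independently with probability $1/2 + \gamma$, so $W$ is a sum of $k$ independent $\{0,1\}$ variables each with mean $1/2 + \gamma$; taking $\eta = \gamma$ (allowed since $\gamma \in [0,1/5]$) and $\ell = k$ gives $\PR{W < k/2} = \PR{\frac{1}{k} W < 1/2} \geq \frac{1}{4} e^{-5\gamma^2 k}$, which is exactly the claimed bound.

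I expect the main obstacle to be justifying cleanly that no classifier can beat the Bayes predictor over $\mathcal{R}_f$---that is, that including irrelevant variables cannot push the error below the Bayes error on the relevant variables actually used; this is where the conditional-independence structure of the source does all the work. Once that information-theoretic reduction is in place, the tail estimate is immediate from~(\ref{e:unfair}). One minor point to handle carefully is the treatment of ties in the majority vote; since I only use the event that a strict majority disagree, ties can be discarded without weakening the inequality.
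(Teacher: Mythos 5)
Your proposal is correct and takes essentially the same approach as the paper: the paper's proof invokes ``the usual Naive Bayes calculation'' to reduce any classifier to the majority vote over the relevant variables it uses, and then applies the tail bound~(\ref{e:unfair}), exactly as you do. The only difference is that you spell out the two compressed steps (that irrelevant variables cancel from the posterior by conditional independence, and that the likelihood-ratio computation makes the Bayes predictor the majority vote over the relevant variables) before applying~(\ref{e:unfair}) with $\eta = \gamma$ and $\ell = k$.
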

{\bf Proof}: The usual Naive Bayes calculation \citep[see][]{DHS00}
implies that the optimal classifier over a certain set $V$ of
variables is a majority vote over $V \cap {\cal R}$.  
Applying the lower tail bound (\ref{e:unfair}) then 
completes the proof. \qed

Our next lemma
shows that, given a sample, the probability that a variable is relevant
(positively correlated with the class label) is monotonically
increasing in its empirical edge.

\begin{lemma} \label{l:relevant.increasing}
For two variables $x_i$ and $x_j$, and any training set $S$
of $m$ examples,
\begin{itemize}
\item $\PR{x_i \text{ relevant} \mid S} > \PR{x_j \text{ relevant} \mid S}$
  if and only if the empirical edge of $x_i$ in $S$ is greater than
  the empirical edge of $x_j$ in $S$, and  
\item $\PR{x_i \text{ relevant} \mid S} = \PR{x_j \text{ relevant} \mid S}$
  if and only if the empirical edge of $x_i$ in $S$ is equal to
  the empirical edge of $x_j$ in $S$.  
\end{itemize}
\end{lemma}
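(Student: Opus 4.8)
The plan is to compute the posterior probability that a variable is relevant using the randomized choice of the relevant set $\Rel$ introduced for the proof of Theorem~\ref{t:lower.lambda}. Since in this section every relevant variable is positively correlated with the label, the event ``$x_i$ is relevant'' is exactly $\{i \in \Rel\}$, where $\Rel$ is uniform over the $K$-subsets of $\{1,\dots,N\}$. Fix $S$ and let $a_i$ denote the number of examples on which $x_i$ agrees with the label, so that the empirical edge of $x_i$ is $a_i/m - 1/2$, which is strictly increasing in $a_i$. Conditioned on $\Rel = R$, the variable values factor across variables, and the label probabilities together with the $(1/2)^m$ factors of the irrelevant variables are common to every $R$; after cancelling them in Bayes' rule, each variable $j$ placed in $R$ contributes a likelihood ratio $L_j = (1+2\gamma)^{a_j}(1-2\gamma)^{m-a_j}$. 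Because $(1+2\gamma)/(1-2\gamma) > 1$ for $0 < \gamma < 1/2$, the map $a_j \mapsto L_j$ is strictly increasing, so $L_i > L_j$ iff $a_i > a_j$ iff the empirical edge of $x_i$ exceeds that of $x_j$, with the analogous statement for equality.

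Carrying out this cancellation gives $\PR{i \in \Rel \mid S} = Z^{-1}\sum_{R \ni i}\prod_{j' \in R} L_{j'}$, where $Z = \sum_{R}\prod_{j' \in R} L_{j'}$ is the same normalizer for every variable. I would then compare $x_i$ and $x_j$ through the numerator difference $\sum_{R \ni i}\prod_{j'\in R}L_{j'} - \sum_{R \ni j}\prod_{j'\in R}L_{j'}$. Subsets $R$ containing both $i$ and $j$, or neither, contribute identically (or not at all) to the two sums and so cancel, leaving only subsets containing exactly one of $i$ and $j$.

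The key step is a sign-preserving pairing: the map $R \mapsto (R\setminus\{i\})\cup\{j\}$ is a bijection from $\{R : i \in R,\ j \notin R\}$ onto $\{R : j \in R,\ i \notin R\}$ that fixes the common factor $R \setminus \{i\}$. Pairing the two sums term by term factors the difference as $(L_i - L_j)\sum_{R : i\in R,\, j \notin R}\prod_{j'\in R\setminus\{i\}}L_{j'}$. Every $L_{j'}$ is strictly positive (as $\gamma < 1/2$), and there is at least one such $R$ (since $1 \le K < N$), so the trailing sum is strictly positive. Hence $\PR{i\in\Rel\mid S} - \PR{j\in\Rel\mid S}$ has the same sign as $L_i - L_j$, which by the first paragraph has the same sign as the difference of empirical edges; this yields both parts of the lemma. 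The only real work is this combinatorial factorization, and the main thing to get right is the bijection on $K$-subsets together with the strict positivity of the remaining weighted sum.
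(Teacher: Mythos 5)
Your proposal is correct, and at its core it runs on the same engine as the paper's proof: after Bayes' rule, everything in $S$ except the columns of $x_i$ and $x_j$ cancels out of the comparison, which then reduces to the sign of $(1/2+\gamma)^{m_i}(1/2-\gamma)^{m-m_i}-(1/2+\gamma)^{m_j}(1/2-\gamma)^{m-m_j}$ --- your $L_i-L_j$ after multiplying through by $2^m$. Where you differ is the mechanism of the cancellation. The paper stays probabilistic: it subtracts off the event that both variables are relevant, conditions on the event $\mathrm{ONE}$ that exactly one of them is, and then uses two facts about the conditional distribution $\mathbb{Q}$ --- that $(S_i,S_j)$ is independent of the remaining columns $S'$, and that by symmetry the two assignments of relevance are equally likely given $S'$ --- before applying Bayes' rule to the pair. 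You instead compute the posterior in closed form as a normalized sum over $K$-subsets and do the cancellation combinatorially: subsets containing both or neither index drop out, and the transposition $R\mapsto(R\setminus\{i\})\cup\{j\}$ pairs the remaining terms so that the difference factors as $(L_i-L_j)$ times a manifestly positive sum. Your bijection is exactly the symmetry the paper invokes, so this is the same decomposition executed differently: your version is more elementary and self-contained (explicit algebra over subsets, no conditional-independence claim to verify), while the paper's is shorter and never has to write the sum over $\binom{N}{K}$ subsets; your version does need the mild side conditions $0<\gamma<1/2$ and $1\le K<N$ --- which hold throughout the paper's regime --- to guarantee that the residual sum is nonempty and strictly positive, and, like the paper, it relies on the relevant set being drawn uniformly at random among $K$-subsets, the randomization introduced for the lower bound.
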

\begin{proof}
Since the random choice of $\cal R$ does not effect that marginal
distribution over the labels,
we can generate $S$ by picking the labels for
all the examples first, then $\cal R$, and finally
the values of the variables on all the examples.
Thus if we can prove the lemma after conditioning on the values of the
class labels, then scaling all of the probabilities by $2^{-m}$
would complete the proof.  So, let us fix the
values of the class labels, and evaluate probabilities only
with respect to the random choice of the relevant variables $\cal R$, and 
the values of the variables.

Let
\[
\Delta = \PR{x_i \in \Rel| S} - \PR{x_j \in \Rel| S}.
\]
First, by subtracting off the probabilities that both variables are
relevant, we have
\[
\Delta = \PR{ x_i \in \Rel, x_j \not \in \Rel \big| S} 
               - \PR{ x_i \not\in \Rel, x_j \in \Rel \big| S}.
\]
\newcommand{\one}{\mathrm{ONE}}
Let $\one$ be the event that exactly one of $x_i$ or $x_j$ is relevant.
Then
\[
\Delta = (\PR{ x_i \in \Rel, x_j \not \in \Rel \big| S, \one} 
          - \PR{ x_i \not\in \Rel, x_j \in \Rel \big| S, \one})\PR{\one}.
\]
So $\Delta > 0$ if and only if
\[
\Delta' \eqdef \PR{ x_i \in \Rel, x_j \not \in \Rel \big| S, \one} 
            - \PR{ x_i \not\in \Rel, x_j \in \Rel \big| S, \one} > 0
\]
(and similarly for $\Delta = 0$ if and only if $\Delta' = 0$).  If
$\mathbb{Q}$ is the distribution obtained by conditioning on $\one$, then
\[
\Delta' = \QR{ x_i \in \Rel, x_j \not \in \Rel \big| S} 
            - \QR{ x_i \not\in \Rel, x_j \in \Rel \big| S}.
\]

Let $S_i$ be the values of variable $i$ in $S$, and define $S_j$
similarly for variable $j$.  Let $S'$ be the values of the other
variables.  Since we have already conditioned on the labels,
after also conditioning on $\one$ (i.e.,
under the distribution $\mathbb{Q}$), the pair $(S_i,S_j)$ is independent of
$S'$.  
For each $S_i$ we have $\PR{S_i \mid x_i \not\in \Rel} = \QR{S_i \mid x_i \not\in \Rel}$.
Furthermore, by symmetry, 
\[
\QR{ x_i \in \Rel, x_j \not \in \Rel \big| S'} = \QR{ x_i \not\in \Rel, x_j \in \Rel \big| S'} = \frac{1}{2}.
\]
Thus, by using Bayes' Rule on each term, we have
\begin{eqnarray*}
\Delta' 
  & = & \QR{ x_i \in \Rel, x_j \not \in \Rel \big| S_i, S_j, S'} 
              - \QR{ x_i \not\in \Rel, x_j \in \Rel \big| S_i, S_j, S'} \\
  & = & 
  \frac{\QR{ S_i, S_j\big| x_i \in \Rel, x_j \not \in \Rel, S'} 
              - \QR{ S_i, S_j \big| x_i \not\in \Rel, x_j \in \Rel, S'}}
       {2\QR{S_i,S_j | S'}} \\
  & = & 
  \frac{(1/2 + \gamma)^{m_i}(1/2 - \gamma)^{m - m_i}
        - (1/2 + \gamma)^{m_j}(1/2 - \gamma)^{m - m_j}}
       {2^{m+1} \QR{S_i,S_j}},
\end{eqnarray*}
where $m_i$ and $m_j$ are the numbers of times that variables
$x_i$ and $x_j$ agree with the label in sample $S$.
The proof concludes by observing that $\Delta'$ is positive 
exactly when $m_i > m_j$ and zero exactly when $m_i = m_j$.
\end{proof}

Because, in this lower bound proof, relevant variables are
always positively associated with the class label, 
we will use a variant of ${\cal M}_{\beta}$ 
which only considers positive features.
\begin{definition}
Let ${\cal V}_{\beta}$ be a vote over the variables
with empirical edge at least $\beta$.  
\end{definition}
When there is no chance of confusion, we will refer to the set
of variables in ${\cal V}_{\beta}$ also as ${\cal V}_{\beta}$
(rather than $V({\cal V}_{\beta})$).

We now establish lower bounds on the probability of variables being included in ${\cal V}_{\beta}$ 
(here $\beta$ can be a function of $\gamma$, but does not depend on the particular sample $S$).

\begin{lemma}
\label{l:irrelevant.lower}
If $\gamma \leq 1/8$  and $\beta \geq 0$ then
the probability that a given variable has empirical edge at least $\beta$ is at least
\[
\frac{1}{5} \exp\left(- 16  \beta^2 m \right) .
\]
If in addition $m \geq 1 / \beta^2$, then the probability that a given variable has empirical edge at least
$\beta$ is at least
\[
\frac{1}{7 \beta \sqrt{m}} 
         \exp\left(-2 \beta^2 m \right)
            - \frac{1}{\sqrt{m}}.
\]
\end{lemma}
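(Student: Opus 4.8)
The plan is to reduce both inequalities to the fair-coin lower tail bounds~\eqref{e:lower.fair} and~\eqref{e:lower.tail} recorded in Section~\ref{s:tools}. Fix the variable in question and, for $i = 1, \dots, m$, let $U_i$ indicate that it agrees with the class label on the $i$th training example, and put $U = \sum_{i=1}^m U_i$. By conditional independence the $U_i$ are i.i.d.\ $\{0,1\}$-valued with $\PR{U_i = 1} = p$, where $p = 1/2$ if the variable is irrelevant and $p = 1/2 + \gamma$ if it is relevant; in either case $p \geq 1/2$. By Definition~\ref{d:empirical.edge} the empirical edge equals $\frac1m U - \frac12$, so the event that the edge is at least $\beta$ is exactly $\{\frac1m U \geq \frac12 + \beta\}$.

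First I would remove the dependence on whether the variable is relevant. Since $p \geq 1/2$, the count $U$ stochastically dominates the agreement count $U^{(0)}$ of a fair coin: coupling the two (e.g.\ via a single sequence of uniform draws with thresholds $1/2$ and $p$) makes $U \geq U^{(0)}$ surely, whence
\[
\PR{\tfrac1m U \geq \tfrac12 + \beta} \;\geq\; \PR{\tfrac1m U^{(0)} \geq \tfrac12 + \beta}.
\]
Because $\frac1m \E(U^{(0)}) = \frac12$, the right-hand side equals $\PR{\frac1m U^{(0)} - \frac1m \E(U^{(0)}) \geq \beta}$, which is precisely the quantity the fair-coin bounds estimate. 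It therefore suffices to prove both displayed inequalities for a fair variable, as the lemma's label anticipates.

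For the first bound I would apply~\eqref{e:lower.fair} with $\ell = m$ and $\eta = \beta$, which (for $\beta \leq 1/8$, and ignoring the integrality of $\beta m$ as in the footnote to~\eqref{e:lower.fair}) gives $\PR{\frac1m U^{(0)} - \frac1m\E(U^{(0)}) \geq \beta} \geq \frac15 e^{-16\beta^2 m}$. For the second bound, under the extra hypothesis $m \geq 1/\beta^2$ (which forces $\beta > 0$), I would instead apply~\eqref{e:lower.tail} with $\ell = m$ and $\eta = \beta$, producing $\frac{1}{7\beta\sqrt m} e^{-2\beta^2 m} - \frac{1}{\sqrt m}$. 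Chaining each with the dominance inequality of the previous paragraph yields the two claimed bounds.

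The argument is little more than a substitution into pre-packaged tail bounds, so I do not expect a genuine obstacle; the one step that deserves explicit mention is the reduction to the fair coin, since~\eqref{e:lower.fair} and~\eqref{e:lower.tail} are stated only for $\PR{U_i = 1} = 1/2$ and cannot be applied verbatim to a relevant variable. The other thing to watch is bookkeeping on the hypotheses: \eqref{e:lower.fair} needs $\beta \leq 1/8$ (supplied in the regime of interest, where $\beta$ is driven to $0$ with $\gamma$) and \eqref{e:lower.tail} needs $m \geq 1/\beta^2$, which is exactly the additional assumption stated in the second half of the lemma.
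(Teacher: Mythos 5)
Your proposal is correct and takes essentially the same route as the paper: the paper's proof likewise reduces the relevant-variable case to the fair-coin case (stating in one sentence the dominance fact that you justify by an explicit coupling), then applies~(\ref{e:lower.fair}) with $\eta = \beta$, $\ell = m$ for the first bound and~(\ref{e:lower.tail}) for the second. Your explicit attention to the hypotheses of the tail bounds (in particular that~(\ref{e:lower.fair}) requires $\beta \leq 1/8$, which the paper's proof invokes without comment) is slightly more careful than the original, but the argument is the same.
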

{\bf Proof}:  
Since relevant variables agree with the class label with probability $1/2+\gamma$, the
probability that a relevant variable has empirical edge at least $\beta$ is lower bounded by 
the probability that an irrelevant variable has empirical edge at least $\beta$.
An irrelevant variable has empirical edge at least $\beta$ only when it agrees with the class on
$1/2+\beta$ of the sample.
Applying~\bndref{e:lower.fair}, this happens with probability at least 
\(
\frac{1}{5}
     \exp\left(- 16 \beta^2 m \right).
\)
The second part uses~\bndref{e:lower.tail} instead of (\ref{e:lower.fair}).
\qed

We now upper bound the probability of a relevant variable being included in
${\cal V}_{\beta}$, again for $\beta$ that does not depend on $S$.

\begin{lemma}
\label{l:relevant.upper}
If $\beta \geq \gamma$, the probability that a given relevant variable has
empirical edge at least $\beta$ is at most $e^{-2(\beta - \gamma)^2 m}$.
\end{lemma}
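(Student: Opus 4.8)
The plan is to prove the statement of Lemma~\ref{l:relevant.upper}, which bounds the probability that a relevant variable (which agrees with the class label with probability $1/2+\gamma$) attains empirical edge at least $\beta$ when $\beta \geq \gamma$. The key observation is that a relevant variable has empirical edge at least $\beta$ precisely when it agrees with the class label on at least a $1/2 + \beta$ fraction of the $m$ training examples. Since the variable is correct on each example independently with probability $1/2+\gamma$, the fraction of examples on which it agrees with the label is an average of $m$ i.i.d.\ $\{0,1\}$-valued random variables with mean $1/2+\gamma$.

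First I would set up the calculation using the Hoeffding bound~(\ref{e:hoeffding}). Let $U_1,\dots,U_m$ be the indicators that the relevant variable agrees with the class label on each of the $m$ examples, and let $U = \sum_{i=1}^m U_i$, so that $\E(U/m) = 1/2+\gamma$. The event that the empirical edge is at least $\beta$ is exactly the event $\frac{1}{m}U \geq 1/2 + \beta$, which we rewrite as
\[
\frac{1}{m}U - \E\left(\frac{1}{m}U\right) \geq (1/2 + \beta) - (1/2 + \gamma) = \beta - \gamma.
\]
Because $\beta \geq \gamma$, the deviation $\eta = \beta - \gamma$ is nonnegative, so the Hoeffding bound~(\ref{e:hoeffding}) applies directly and yields a probability at most $e^{-2(\beta-\gamma)^2 m}$, which is exactly the claimed bound.

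This is a short and essentially mechanical argument, so I would not expect a serious obstacle; the only point requiring care is the direction of the deviation. The hypothesis $\beta \geq \gamma$ is exactly what guarantees $\eta = \beta - \gamma \geq 0$, so that we are bounding an \emph{upper} tail of $U$ about its mean and the Hoeffding bound in the form~(\ref{e:hoeffding}) is applicable without modification. (Were $\beta < \gamma$, the probability in question would exceed $1/2$ and no such exponential upper bound could hold.) The independence of the $U_i$ is immediate since the examples are drawn i.i.d.\ and, conditioned on the class labels, the variable's agreement on distinct examples is independent; this matches the standing assumption under which~(\ref{e:hoeffding}) was stated. Thus the whole proof reduces to identifying $\eta = \beta - \gamma$ and invoking~(\ref{e:hoeffding}).
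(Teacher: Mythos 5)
Your proof is correct and matches the paper's own argument: the paper likewise proves Lemma~\ref{l:relevant.upper} by a direct application of the Hoeffding bound~(\ref{e:hoeffding}) to the event that the relevant variable agrees with the label $\beta-\gamma$ more often than its expectation. Your write-up simply makes explicit the indicator-variable setup and the role of the hypothesis $\beta \geq \gamma$, which the paper leaves implicit.
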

{\bf Proof}:  
Use~(\ref{e:hoeffding}) to bound the probability that a relevant feature agrees
with
the class label $\beta - \gamma$ more often than its expected
fraction of times.
\qed

\subsection{Bounding $\lambda$-Exclusiveness}
\label{s:exclusive}

Recall that $n(S)$ is the number of variables used by $A(S)$, 
and $\beta(S)$ is the edge of the variable whose rank, when
the variables are ordered by their empirical edges, is $n(S)$.
We will show that: if $A(S)$ is
$\lambda$-exclusive, then there is reasonable probability that $\beta(S)$ is 
at least the critical value $\beta^* = \gamma \ln (N/K) / 5b$.
Specifically, if $A$ is $\lambda$-exclusive, then,
for any small enough $\gamma$, we have $\displaystyle
\PR{\beta(S) \geq \beta^* } > \lambda/2$.

\newcommand{\Irrel}{ { \cal I } }
\newcommand{\Mbs}{ {\cal M}_{\beta(S)} }
\newcommand{\Vcrit}{  {\cal V}_{\bcrit} }

Suppose, given the training set $S$, the variables are sorted in
decreasing order of empirical edge (breaking ties arbitrarily, say using
the variable index).  
Let ${\cal V}_{S,k}$ consist of the first
$k$ variables in this sorted order, the ``top $k$'' variables.

Since for each sample $S$ and each variable $x_i$, the probability
$\PR{x_i \text{ relevant} \big| S}$ decreases as the empirical edge of
$x_i$ decreases (Lemma~\ref{l:relevant.increasing}), the expectation
$\displaystyle \E \left( \frac{|{\cal V}_{S,k} \cap \Rel |}{| {\cal V}_{S,k} |} \;\bigg|\; S \right) $ 
is non-increasing with $k$.

Furthermore, Lemma~\ref{l:relevant.increasing} also implies that for each sample
$S$, we have
\[
\E \left(\frac{ | V(A(S)) \cap \Rel |}{|V(A(S))|}  \;\bigg|\; S \right) 
\leq 
\E \left( \frac{| {\cal V}_{S,n(S)} \cap \Rel |} 
               { | {\cal V}_{S,n(S)} |}  
\;\bigg|\; S \right) .
\]
Therefore, by averaging over samples, for each $\gamma$ we have
\begin{equation}
\label{e:by.M}
\E \left( \frac{ | V(A(S)) \cap \Rel |  }{ |V(A(S))|  } \right)
	\leq 
\E \left( \frac{| {\cal V}_{S,n(S)} \cap \Rel |} 
               { | {\cal V}_{S,n(S)} |}\right).
\end{equation}

Note that the numerators in the expectations are never greater than the denominators.
We will next give upper bounds 
on $| \Vcrit \cap \Rel |$ 
and lower bounds on $| \Vcrit |$ that each 
hold with probability $1 - \gamma$.  

The next step is a high-confidence upper bound on $| \Vcrit \cap \Rel
|$.  From Lemma~\ref{l:relevant.upper}, the probability that a
particular relevant variable is in $\Vcrit$ is at most (recall that $m
= b/\gamma^2$)
\begin{align*}
e^{-2(\bcrit - \gamma)^2 m}  
&=   e^{-2 b (\bcrit / \gamma - 1)^2} \\ 
&=  \exp \left(-2 b \left( \frac{\ln(1/\gamma)^{1/4}}{5b} - 1 \right)^2 \right)   \\ 
&=  \exp \left(\frac{-2\ln(1/\gamma)^{1/2}}{25b} + \frac{ 4 \ln(1/\gamma)^{1/4} }{ 5} - 2b \right)   \\
& < \exp \left(\frac{-2\ln(1/\gamma)^{1/2}}{25b}  + \frac{4 \ln(1/\gamma)^{1/4}}{5} \right).
\end{align*}

\newcommand{\prel}{p_{\text{rel}}}
\newcommand{\pirrel}{p_{\text{irrel}}}

Let $\prel = \exp \left(\frac{-2\ln(1/\gamma)^{1/2}}{25b} + \frac{4
  \ln(1/\gamma)^{1/4}}{5} \right)$ be this upper bound, and note that
$\prel$ drops to 0 as $\gamma$ goes to 0, but a rate slower than
$\gamma^\epsilon$ for any $\epsilon$.  The number of relevant
variables in $\Vcrit$ has a binomial distribution with parameters $K$
and $p$ where $p < \prel$.  The standard deviation of this
distribution is 
\begin{equation}
\label{e:sigma.bound}
\sigma = \sqrt{K p (1-p)} < \sqrt{Kp} < \frac{\exp \left( \frac{\ln(1/\gamma)^{1/3} }{ 2 }\right) \sqrt{\prel} } {\gamma}  .
\end{equation}

Using the Chebyshev bound, 
\begin{equation}  \label{e:chebyshev}
\PR{| X - \E(X)| > a \sigma} \leq \frac{1}{a^2}
\end{equation}
with $a = 1 / \sqrt{\gamma}$
gives that
\begin{eqnarray}
\nonumber
\PR{ | \Vcrit  \cap \Rel | - Kp > \frac{\sigma}{\sqrt{\gamma} } }&\leq& \gamma \\
\label{e:two.term}
\PR{ | \Vcrit  \cap \Rel |  > K \prel + \frac{\sigma}{\sqrt{\gamma} } }&\leq& \gamma.
\end{eqnarray}

Since $\sigma < \sqrt{Kp} < \sqrt{K \prel}$ by~(\ref{e:sigma.bound}), we have $\sigma \sqrt{K\prel} < {K \prel}$.
 Substituting the values of $K$ and $\prel$ into the square-root yields 
\begin{eqnarray*}
K \prel & > & \sigma \times 
 \frac{\exp \left( \frac{\ln(1/\gamma)^{1/3} }{ 2 }\right) \exp \left(\frac{-\ln(1/\gamma)^{1/2}}{25b} + \frac{2
  \ln(1/\gamma)^{1/4}}{5} \right)} {\gamma} \\
    & > & \sigma/\sqrt{\gamma},
\end{eqnarray*}
for small enough $\gamma$.  Combining with (\ref{e:two.term}), we
get that
\begin{equation} \label{e:mbr.upper}
\PR{ | \Vcrit  \cap \Rel |  > 2K \prel } \leq \gamma
\end{equation}
holds for small enough $\gamma$.

Using similar reasoning, we now obtain a lower bound on 
the expected number of variables in $\Vcrit$.
Lemma~\ref{l:irrelevant.lower} shows that, for each variable, the probability of the variable having empirical edge $\bcrit$
is at least 
\begin{align*}
\frac{1}{7 \bcrit \sqrt{m}} 
         \exp\left(-2 \bcrit^2 m \right)
            - \frac{1}{\sqrt{m}}
&= \frac{5 \sqrt{b}}{7 \ln(1/\gamma)^{1/4} } 
         \exp\left(-2 \frac{\ln(1 / \gamma)^{1/2} }{25b } \right)
            - \frac{\gamma}{\sqrt{b}} \\
&> \frac{ \sqrt{b}}{2 \ln(1/\gamma)^{1/4} } 
         \exp\left(-2 \frac{\ln(1 / \gamma)^{1/2} }{25b } \right)  
\end{align*}
for sufficiently small $\gamma$.
Since the empirical edges of different variables are independent,
the probability that at least $n$ variables have empirical edge at least $\bcrit$ is lower bounded by
the probability of at least $n$ successes from the binomial distribution with parameters $N$ and $\pirrel$ where
\[
\pirrel = \frac{ \sqrt{b}}{2 \ln(1/\gamma)^{1/4} } 
         \exp\left(-2 \frac{\ln(1 / \gamma)^{1/2} }{25b } \right).
\]
If, now, we define $\sigma$ to be the standard deviation
of this binomial distribution, then,
like before, $\sigma  = \sqrt{N \pirrel (1-\pirrel)} < \sqrt{N\pirrel} $,
and
\begin{eqnarray*}
N \pirrel / 2 & > & \sigma \sqrt{N \pirrel}/2 \\
 & = & \frac{\sigma}{2} \times \frac{\exp((1/2)(\ln(1/\gamma)^{1/4} + \ln(1/\gamma)^{1/3}))}{\gamma}
   \times 
   \frac{ b^{1/4}}{\sqrt{2} \ln(1/\gamma)^{1/8} } 
         \exp\left(-\frac{\ln(1 / \gamma)^{1/2} }{25b } \right),
\end{eqnarray*}
so that, for small enough $\gamma$, $N \pirrel / 2 > \sigma / \sqrt{\gamma}$.
Therefore applying the Chebyshev bound~(\ref{e:chebyshev}) 
with $a= 1/\sqrt{\gamma}$ gives (for sufficiently small $\gamma$)
\begin{equation}  \label{e:mb.lower}
\PR{ | \Vcrit | < \frac{N \pirrel}{2} } 
\leq \PR{ | \Vcrit | < N \pirrel - \sigma/\sqrt{\gamma} } 
< \gamma.
\end{equation} 

Recall that 
\[
q_\gamma = \PR{\beta(S) \geq \beta^*}  = \PR{n(S) \leq \vert \Vcrit \vert}.
\]
If $A$ is $\lambda$-exclusive then, using (\ref{e:by.M}), we have
\begin{align*}
\lambda \leq \E \left( \frac{ | V(A(S)) \cap \Rel | ) }{ |V(A(S))|  } \right)
	& \leq 
\E \left( \frac{| {\cal V}_{S,n(S)} \cap \Rel |} 
               { | {\cal V}_{S,n(S)} |}\right) \\
	&\leq  (1 - q_\gamma ) \E \left( \frac{| {\cal V}_{S,n(S)} \cap \Rel |} 
               { | {\cal V}_{S,n(S)} |} \;\bigg|\; |\Vcrit| < n(S) \right)  + q_\gamma  \\
	&\leq  (1 - q_\gamma ) \E \left( \frac{ | \Vcrit \cap \Rel |   }{  | \Vcrit |}  \;\bigg|\; |\Vcrit| < n(S) \right)  + q_\gamma  \\
	&\leq  (1 - q_\gamma ) \left( \frac{2K \prel} { N \pirrel/2} + 2 \gamma \right)  + q_\gamma 
\end{align*}
where we use the upper and lower bounds from
Equations~(\ref{e:mbr.upper}) and~(\ref{e:mb.lower}) that each hold
with probability $1 - \gamma$. 
Note that the
ratio

\begin{align*}
\frac{2K \prel} {N \pirrel/2}
	&\leq \frac{ \displaystyle 2 \frac{e^{\ln(1/\gamma)^{1/3} }}{\gamma^2}  \exp \left(\frac{-2\ln(1/\gamma)^{1/2}}{25b}  + \frac{4 \ln(1/\gamma)^{1/4}}{5} \right)  }
			{ \displaystyle \frac{e^{ \ln(1/\gamma)^{1/3}} e^{\ln(1 / \gamma)^{1/4}} }{ 4 \gamma^2}   
					\frac{ \sqrt{b}}{ \ln(1/\gamma)^{1/4} }  \exp\left(-2 \frac{\ln(1 / \gamma)^{1/2} }{25b } \right) }\\
	&= \frac{ \displaystyle  8   \ln(1/\gamma)^{1/4}  \exp \left(\frac{-2\ln(1/\gamma)^{1/2}}{25b}  + \frac{4 \ln(1/\gamma)^{1/4}}{5} \right) }
			{ \displaystyle  \sqrt{b} \: e^{\ln(1 / \gamma)^{1/4} }
					    \exp\left(-2 \frac{\ln(1 / \gamma)^{1/2} }{25b } \right) } \\
	&=  \frac{ \displaystyle  8   \ln(1/\gamma)^{1/4} \exp \left( \frac{- \ln(1/\gamma)^{1/4}}{5} \right) }
			{ \displaystyle  \sqrt{b}  	  } 				     
\end{align*}
which goes to 0 as $\gamma$ goes to 0.
Therefore,  
\[
\lambda \leq \E \left(\frac{ | V(A(S)) \cap \Rel | ) }{ |V(A(S))|  } \right) 
        \leq q_\gamma + o(1)
\]
which implies that,
\[
q_\gamma = \displaystyle \PR{\beta(S) \geq \beta^*} \geq \lambda -o(1)
\]
as $\gamma$ goes to 0.

\subsection{Large Error}
\label{s:error}

Call a variable \emph{good} if it is relevant and its 
empirical edge is at least $\bcrit$ in the sample.
Let $p$ be the probability that a relevant variable is good.
Thus the number of good variables is binomially distributed with parameters $K$ and $p$.
We have that the expected number of good variables is $pK$ and the variance is $Kp(1-p) < Kp$.
By Chebyshev's inequality, we have
\begin{equation}
\label{e:a.bound}
\PR{ \text{\# good vars} \geq Kp + a \sqrt{Kp} }  \leq \PR{ \text{\# good vars} \geq Kp + a \sqrt{Kp(1-p)}  } \leq  \frac{1}{a^2}, 
\end{equation}
and setting  $a= \sqrt{Kp}$,  this gives
\begin{equation}
\PR{ \text{\# good vars} \geq 2Kp } \leq \frac{1}{Kp}.
\end{equation}

By Lemma~\ref{l:relevant.upper}, $\displaystyle Kp \leq K e^{-2(\bcrit-\gamma)^2 m} = K e^{-2  b \left(\ln(1/\gamma)^{1/4}  /5b -1 \right)^2} $, so
\begin{align*}
\ln(Kp) &\leq \ln K -2b \left( \frac{ \ln(1 / \gamma)^{1/2} }{ 25 b^2} - \frac{2 \ln(1/\gamma)^{1/4}}{5b} +1 \right) \\
\ln(Kp) &\leq 2 \ln(1/\gamma) + \ln(1/\gamma)^{1/3} -2b \left( \frac{ \ln(1 / \gamma)^{1/2} }{ 25 b^2} - \frac{2 \ln(1/\gamma)^{1/4}}{5b} +1 \right).
\end{align*}
So for small enough $\gamma$, 
\[ 
\ln( Kp)  \leq 2 \ln (1/\gamma) - \frac{\ln(1/\gamma)^{1/2}}{25b}
\]
and thus $Kp \in o( 1/ \gamma^2)$.

So if $Kp > 1/\gamma$, then with probability at least $1-\gamma$, there
are less than $2Kp \in o(1/\gamma^2)$ good variables.
On the other hand, if $Kp < 1/ \gamma$, then, setting $a = \sqrt{1/\gamma}$
in bound~(\ref{e:a.bound})
gives that the probability  that there are more than $2 / \gamma$ good 
variables is at most  $\gamma$. 
So in either case the probability that there are more than 
$\frac{2}{ \gamma^2 } \EXP{- \ln( 1/\gamma)^{1/2} / 25b}$
good variables is at most $\gamma$
(for small enough $\gamma$).

So if $\displaystyle \PR{\beta(S) \geq \bcrit }
\geq q_\gamma$, then with probability at least $q_\gamma - \gamma$
algorithm $A$ is using a hypothesis with 
at most 
$\frac{2}{ \gamma^2 } \EXP{- \ln( 1/\gamma)^{1/2} / 25b}$
relevant variables.  Applying Lemma~\ref{l:lower.relevant} yields the
following lower bound on the probability of error:
\begin{equation}
\label{e:final.lower}
(q_\gamma - \gamma) \frac{1}{4} \EXP{ -10 \EXP{- \ln( 1/\gamma)^{1/2} / 25b}}.
\end{equation}
Since the limit of (\ref{e:final.lower}) for small $\gamma$ is
$q_\gamma /4$, this completes the proof of Theorem~\ref{t:lower.lambda}.

\section{Relaxations of some assumptions}

To keep the analysis clean, and facilitate the interpretation of the
results, we have analyzed an idealized model.  In this section,
we briefly consider the consequences of some relaxations of our
assumptions.
       
\subsection{Conditionally dependent variables}
\label{s:dependent}

Theorem~\ref{t:no.learning.misleading} can be generalized to the case
in which there is limited dependence among the variables, after
conditioning on the class designation, in a variety of
ways.  For example, suppose that there is a degree-$r$ graph $G$ whose nodes
are variables, and such that, conditioned on the label, each variable
is independent of all variables not connected to it by an edge in $G$.
Assume that $k$ variables agree with the label with probability $1/2 +
\gamma$, and the $n-k$ agree with the label with probability $1/2$.
Let us say that a source like this {\em has $r$-local dependence}.
Then applying a
Chernoff-Hoeffding bound for such sets of random variables due to
\citet{Pem01}, if $r \leq n/2$, one gets a bound of
$c (r+1) \exp\left(\frac{-2\gamma^2 k^2}{n (r + 1)}\right)$
the probability of error.

\subsection{Variables with different strengths}
\label{s:different}

We have previously assumed that all relevant variables are equally
strongly associated with the class label.  
Our analysis is easily
generalized to the situation when the strengths of associations
fall in an interval $[\gmin, \gmax]$. 
Thus relevant variables agree with the class label with probability at least $1/2 + \gmin$ and
misleading variables agree with the class label with probability at least $1/2 - \gmax$.
Although a sophisticated analysis would take each variable's degree of association
into account,  it is possible to leverage our previous analysis with a simpler approach.
Using the $1/2 + \gmin$ and $1/2 - \gmax$ underestimates on the probability that relevant variables and
misleading variables agree with the class label leads to an analog of 
Theorem~\ref{t:no.learning.misleading}.
This analog says that models voting $n$ variables, $k$ of which are relevant and $\ell$ of which are misleading,
have error probabilities bounded by 
\[
\displaystyle \exp\left(\frac{-2 [\gmin k  - \gmax \ell]_+^2 }{n} \right).
\]

We can also use the upper and lower bounds on association to get high-confidence bounds 
(like those of Lemmas~\ref{l:few.misleading} and \ref{l:many.relevant})  
on the numbers of relevant and misleading features 
in models $\Mb$.
This leads to an analog of Theorem~\ref{t:learning.beta} bounding the expected error rate of $\Mb$ by 
\[
(1 + o(1)) 
  \exp \left( \frac{ - 2 \gmin^2  K \left[1 - 4 (1+ \gmax / \gmin) e^{-2 (\gmin - \beta)^2 m} 
                                           -  \gmin \right]_+^2}
                   {1+ \frac{8 N}{K} e^{-2 \beta^2 m} +  \gmin} \right)
\]
when  $0 \leq \beta \leq \gamma$ and $\gmax \in o(1)$.
Note that $\gamma$ in Theorem~\ref{t:learning.beta} is replaced by 
$\gmin$ here, and $\gmax$ only appears in the $4(1 + \gmax / \gmin)$ factor (which replaces an ``8'' in the original theorem).

Continuing to mimic our previous analysis gives analogs to
Theorem~\ref{t:beta.cgamma} and Corollary~\ref{c:inclusive.good}.
These analogs imply that if $\gmax / \gmin$ is bounded then algorithms
using small $\beta$ perform well in the same limiting situations used
in Section~\ref{s:lower} to bound the effectiveness of exclusive
algorithms.

A more sophisticated analysis keeping better track of the degree of association 
between relevant variables and the class label may produce better bounds.
In addition, if the variables have 
varying strengths then it makes sense to consider classifiers that
assign different voting weights to the variables based on their estimated
strength of association with the class label.
An analysis that  takes account of these
issues is a potentially interesting subject for further research.  

\section{Conclusions}

We analyzed learning when there are few examples, a small fraction of
the variables are relevant, and the relevant variables are only weakly
correlated with the class label.  In this situation, algorithms that
produce hypotheses consisting predominately of irrelevant variables
can be highly accurate (with error rates going to 0).  Furthermore,
this inclusion of many irrelevant variables is essential.  Any
algorithm limiting the expected fraction of irrelevant variables in
its hypotheses has an error rate bounded below by a constant.  This is
in stark contrast with many feature selection heuristics that limit
the number of features to a small multiple of the number of examples,
or that limit the classifier to use variables that pass stringent
statistical tests of association with the class label.

These results have two implications on the practice of machine
learning.  First, they show that the engineering practice of producing
models that include enormous numbers of variables is sometimes
justified.  Second, they run counter to the intuitively
appealing view that accurate class prediction ``validates''
the variables used by the predictor.

\section*{Acknowledgements}

We thank Aravind Srinivasan for his help.

\appendix

\section{Appendices}

\subsection{Proof of~(\protect\ref{e:chernoff}) and~(\protect\ref{e:leq4})}
\label{a:chernoff}

Equation~4.1 from \citep{MR95} is
\begin{equation}  \label{e:chernoff.MR}
\PR{U > (1+\eta)\E(U)}
   < \left(\frac{e^\eta}{(1+\eta)^{1+\eta}} \right)^{\E(U)}
\end{equation}
and holds for independent 0-1 valued $U_i$'s each with (possibly different) probabilities $p_i=P(U_i=1)$
where $0 < p_i < 1$ and $\eta > 0$.
Taking the logarithm of the RHS, we get
\begin{eqnarray}
\ln(\mbox{RHS})  &=& \E(U) \left(\eta  - (1+\eta) \ln(1+\eta) \right)  \label{e:log.rhs} \\
&<& \E(U) \left(\eta+1  - (1+\eta) \ln(1+\eta) \right)    \nonumber \\
&=&  -\E(U) (\eta+1)  (\ln (1+\eta) -1 ),  \nonumber
\end{eqnarray}
which implies (\ref{e:chernoff}).  
From~(\ref{e:chernoff.MR}), when $0 \leq \eta \leq 4$ (since $\eta- (1+\eta) \ln(1+\eta) < -\eta^2 / 4$ there),
$
\PR{U>(1+\eta)\E(U)}
  < \exp\left(-\eta^2 \E(U)/4 \right)
$
showing~(\ref{e:leq4}).

\subsection{Proof of~(\protect\ref{e:highconf})}  
\label{a:highconf}
Using (\ref{e:chernoff}) with $\eta = 3+3 \ln (1/\delta)/\E(U)$ gives
\begin{align*}
\PR{U> 4\E(U) + 3 \ln(1/\delta) } 
& < \exp\left(-(4 \E(U)+ 3 \ln \delta) \ln \left(\frac{4 + 3 \ln(1/\delta)/\E(U) }{e}\right) \right) \\   
& < \exp\left(-( 3 \ln(1/\delta) \ln \left(\frac{4}{e}\right) \right)   \\
& < \exp\left(- \ln(1/\delta) \right) = \delta
\end{align*}
using the fact that $\ln(4/e) \approx 0.38 > 1/3$.

\subsection{Proof of~(\protect\ref{e:lower.tail})}
\label{a:lower.tail}

The following is a straightforward consequence of the
Berry-Esseen inequality.
\begin{lemma}[{\citep[see][Theorem 11.1]{Das08}}]
\label{l:be}
Under the assumptions of Section~\ref{s:tools} with each $\PR{U_i=1} = 1/2$, let: 
\begin{align*}
T_i &= 2 (U_i-1/2),  \\
T &= \sqrt{\frac{1}{\ell}} \, \sum_{i=1}^{\ell} T_i, \text{ and} \\
Z &\text{  be a standard normal random variable. }
\end{align*}
Then for all $\eta$, we have
\( \displaystyle
\bigl\lvert \PR{ T > \eta} - \PR{ Z > \eta } \bigr\rvert \leq \frac{1}{\sqrt{\ell}}.
\)
\end{lemma}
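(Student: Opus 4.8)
The plan is to obtain Lemma~\ref{l:be} as a direct application of the Berry--Esseen inequality, in the form given by \citet[Theorem~11.1]{Das08}, to the i.i.d.\ sequence $T_1,\dots,T_\ell$. First I would record the moments of a single term. Since $\PR{U_i=1}=1/2$, the variable $T_i = 2(U_i - 1/2)$ takes the values $+1$ and $-1$ with equal probability, so $\E(T_i)=0$, $\var(T_i)=\E(T_i^2)=1$, and the third absolute moment is $\E(|T_i|^3)=1$. Because the per-term variance $\sigma^2$ equals $1$, the quantity $T = \sqrt{1/\ell}\,\sum_{i=1}^\ell T_i$ is \emph{exactly} the standardized sum $\frac{1}{\sigma\sqrt{\ell}}\sum_i T_i$ to which Berry--Esseen applies; no extra rescaling is needed.

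Next I would invoke the Berry--Esseen bound itself, which controls the uniform distance between the cumulative distribution function of the standardized sum and the standard normal cumulative distribution function $\Phi$ by $C\,\rho/(\sigma^3\sqrt{\ell})$, where $\rho$ is the common third absolute moment and $C$ is a universal constant. Substituting $\rho=\sigma=1$ collapses this estimate to $C/\sqrt{\ell}$, uniformly over the argument. To match the stated form, which compares upper tails, I would pass from the distribution functions to the tails using $\PR{T>\eta}=1-\PR{T\le\eta}$ and $\PR{Z>\eta}=1-\Phi(\eta)$: the two constant terms cancel, so that $\bigl\lvert\PR{T>\eta}-\PR{Z>\eta}\bigr\rvert = \bigl\lvert\PR{T\le\eta}-\Phi(\eta)\bigr\rvert$, which is bounded by the uniform estimate $C/\sqrt{\ell}$ for every $\eta$.

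The only genuinely quantitative point — and the sole place where anything could go wrong — is checking that $C$ may be taken to be at most $1$, so that the bound reads $1/\sqrt{\ell}$ rather than $C/\sqrt{\ell}$. This is comfortable in the present case because the $T_i$ follow a symmetric two-point (Rademacher) law, for which the relevant Berry--Esseen constant is known to sit well below $1$; alternatively, I would simply read off the explicit constant supplied by \citet[Theorem~11.1]{Das08} and verify that it does not exceed $1$ for this distribution. Beyond that constant check the argument is entirely mechanical, which is exactly why the statement is flagged as a ``straightforward consequence'' of Berry--Esseen.
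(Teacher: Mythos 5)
Your proposal is correct and is essentially the paper's own argument: the paper offers no proof beyond citing the Berry--Esseen inequality as \citep[Theorem 11.1]{Das08}, and your write-up simply makes that application explicit (Rademacher moments $\sigma=\rho=1$, passing from distribution functions to upper tails, and checking the universal constant is at most $1$, which holds since the cited form of Berry--Esseen has constant below $1$). Nothing in your route differs from what the citation implies.
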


\begin{lemma}[{\citep[][Chapter VII, section 1]{Fel68}}]
\label{l:gaussian.lower}
If $Z$ is a standard normal random variable and $x > 0$, then
\[
 \frac{1}{\sqrt{2 \pi}} 
        \left(\frac{1}{x} - \frac{1}{x^3}\right) e^{-x^2/2}
< \PR{Z > x}   <
 \frac{1}{\sqrt{2 \pi}} 
        \left( \frac{1}{x} \right) e^{-x^2/2}.
\]  

\end{lemma}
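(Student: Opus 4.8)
The plan is to reduce everything to the Gaussian integral $\PR{Z > x} = \frac{1}{\sqrt{2\pi}} \int_x^\infty e^{-t^2/2} \, dt$ and exploit the single elementary identity $\frac{d}{dt} e^{-t^2/2} = -t\, e^{-t^2/2}$, which makes $\int_x^\infty t\, e^{-t^2/2}\, dt = e^{-x^2/2}$ exactly computable. Both the upper and lower bounds will follow by comparing the integrand against scaled versions of $t\, e^{-t^2/2}$, so the whole argument avoids any appeal to the nonelementary antiderivative of $e^{-t^2/2}$ itself.

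For the upper bound I would argue directly. On the interval of integration we have $t \geq x > 0$, hence $t/x \geq 1$, so
\[
\int_x^\infty e^{-t^2/2}\, dt \;\leq\; \frac{1}{x} \int_x^\infty t\, e^{-t^2/2}\, dt \;=\; \frac{1}{x}\, e^{-x^2/2},
\]
and dividing by $\sqrt{2\pi}$ gives $\PR{Z > x} \leq \frac{1}{\sqrt{2\pi}} \frac{1}{x} e^{-x^2/2}$. Strictness follows because $t/x > 1$ on the open ray $t > x$, so the displayed inequality is strict.

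For the lower bound I would use a monotonicity argument on the difference. Define
\[
h(x) \;=\; \int_x^\infty e^{-t^2/2}\, dt \;-\; \left(\frac{1}{x} - \frac{1}{x^3}\right) e^{-x^2/2},
\]
so that the claim is $h(x) > 0$ for $x > 0$. Differentiating, the fundamental theorem contributes $-e^{-x^2/2}$, while a short product-rule computation gives $\frac{d}{dx}\!\left[\left(\frac{1}{x} - \frac{1}{x^3}\right) e^{-x^2/2}\right] = \left(\frac{3}{x^4} - 1\right) e^{-x^2/2}$; the $-1$ and the $+1$ cancel, leaving
\[
h'(x) \;=\; -\frac{3}{x^4}\, e^{-x^2/2} \;<\; 0 .
\]
Since $h$ is strictly decreasing on $(0,\infty)$ and $\lim_{x\to\infty} h(x) = 0$ (both terms vanish), we conclude $h(x) > 0$ for every finite $x > 0$, which is exactly the lower bound after dividing by $\sqrt{2\pi}$.

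The only delicate step is the lower bound: one must guess the correct comparison function $\left(\frac{1}{x} - \frac{1}{x^3}\right) e^{-x^2/2}$ and verify that the $\frac{1}{x^3}$ correction term is precisely what forces the $-1$ to cancel in $h'$. With the wrong correction the derivative does not keep a definite sign, so the cancellation is the real content here; the boundary condition at infinity and the sign of $h'$ then do the rest with no further estimation. I expect this derivative computation, rather than the upper bound, to be the main obstacle, and it is purely mechanical once the comparison function is fixed.
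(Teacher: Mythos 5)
Your proof is correct, and it is worth noting that the paper itself offers no proof of this lemma --- it is imported directly from Feller (Chapter VII, Section 1) as a known fact, so your argument fills in something the paper leaves as a citation. Both halves check out: the upper bound via $e^{-t^2/2} \le (t/x)\, e^{-t^2/2}$ for $t \ge x$ together with the exact integral $\int_x^\infty t\, e^{-t^2/2}\, dt = e^{-x^2/2}$ is sound (with strictness coming from the strict inequality on the open ray $t > x$), and in the lower bound your derivative computation is right: the product rule gives $\tfrac{d}{dx}\bigl[(\tfrac{1}{x} - \tfrac{1}{x^3}) e^{-x^2/2}\bigr] = (\tfrac{3}{x^4} - 1)e^{-x^2/2}$, the $\pm 1$ terms cancel against the fundamental-theorem term to leave $h'(x) = -\tfrac{3}{x^4} e^{-x^2/2} < 0$, and strict monotonicity plus $h(x) \to 0$ at infinity forces $h > 0$ on $(0,\infty)$. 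Your route is in substance the classical argument in Feller: he observes that $-(\tfrac{1}{x}-\tfrac{1}{x^3})e^{-x^2/2}$ and $-\tfrac{1}{x}e^{-x^2/2}$ have derivatives $(1 - \tfrac{3}{x^4})e^{-x^2/2}$ and $(1+\tfrac{1}{x^2})e^{-x^2/2}$, which sandwich the Gaussian density, and integrates over $[x,\infty)$; your monotone-difference function $h$ is exactly that integration in disguise, and your $t/x$ comparison for the upper bound is a minor (arguably cleaner) variant. One last observation: for $0 < x \le 1$ the claimed lower bound is nonpositive and hence trivial, but your argument handles all $x > 0$ uniformly, so nothing is lost.
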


Now, to prove (\ref{e:lower.tail}),
let $M = \frac{1}{\ell} \sum_{i=1}^{\ell} (U_i - \frac{1}{2})$ and
let $Z$ be a standard normal random variable.  
Then Lemma~\ref{l:be} implies that, for all $\kappa$
\[
\left| \PR{ 2 \sqrt{\ell} M >  \kappa }
- \PR{Z > \kappa} \right|
 \leq \frac{1}{\sqrt{\ell}}.
\]
Using 
$\kappa = 2 \eta \sqrt{\ell}$, 
\begin{equation}
\label{e:bin.by.gaussian}
\PR{M > \eta} 
  \geq \PR{Z > 2 \eta \sqrt{\ell} }
             - \frac{1}{\sqrt{\ell}}.
\end{equation}
Applying Lemma~\ref{l:gaussian.lower}, we get
\[
\PR{Z > 2 \eta \sqrt{\ell} }
  \;\; \geq \;\; \frac{1}{\sqrt{2 \pi}} 
     \left(\frac{1}{2 \eta \sqrt{\ell}}
           - \left(\frac{1}{2 \eta \sqrt{\ell}}\right)^3 
                \right)
       e^{-2 \eta^2 \ell}.
\]
Since $\ell \geq 1/\eta^2$, we get
\begin{align*}
\PR{ Z > 2 \eta \sqrt{\ell} }
  \;\; &\geq \;\; \frac{1}{\sqrt{2 \pi}}
     \left(\frac{1}{2} - \frac{1}{8} \right)
     \frac{1}{\eta \sqrt{\ell}}
       e^{-2 \eta^2 \ell} \\
&\geq \;\;   \frac{1}{7 \eta \sqrt{\ell}}   e^{-2 \eta^2 \ell}.
\end{align*}
Combining with (\ref{e:bin.by.gaussian}) completes the proof
of~(\ref{e:lower.tail}). \qed

\subsection{Proof of (\protect\ref{e:lower.fair})}
\label{a:lower.fair}

We follow the proof of Proposition 7.3.2 in \citep[][Page 46]{MV11}.

\begin{lemma}
For $n$ even, let $U_1, \ldots, U_n$ be i.i.d.~RVs with $\PR{U_1 = 0} = \PR{U_1 = 1} = 1/2$
and $U = \sum_{i=1}^n$. 
Then for integer $t \in [0, \frac{n}{8}]$,
\[
\PR{U \geq \frac{n}{2} + t} \geq \frac{1}{5} e^{-16 t^2 / n} .
\]
\end{lemma}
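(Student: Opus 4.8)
The plan is to set $p_j \eqdef \binom{n}{n/2+j}\, 2^{-n}$, so that the quantity to bound is $\PR{U \geq n/2 + t} = \sum_{j \geq t} p_j$, and to prove the bound by splitting into a ``bulk'' regime of small $t$ and a ``tail'' regime of larger $t$, with the split near $t \approx \sqrt{n}/3$. Two elementary facts about the $p_j$ drive everything. First, the standard central binomial estimates $\binom{2m}{m} \geq 4^m/(2\sqrt m)$ and $\binom{2m}{m} \leq 4^m/\sqrt{\pi m}$ give, with $n = 2m$, the two-sided bound $\tfrac{1}{\sqrt{2n}} \leq p_0 \leq \sqrt{2/(\pi n)}$. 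Second, since $p_{j+1}/p_j = (n/2 - j)/(n/2 + j + 1)$, telescoping together with $\ln(1-x) \geq -x/(1-x)$ yields $\ln(p_{2t}/p_0) \geq -\sum_{i=0}^{2t-1} (2i+1)/(n/2 - i)$; because the hypothesis $t \leq n/8$ forces every index $i \leq 2t-1$ to satisfy $n/2 - i \geq n/4$, each summand is at most $(8i+4)/n$, and the sum is \emph{exactly} $16t^2/n$. Hence $p_{2t} \geq p_0\, e^{-16 t^2/n} \geq \tfrac{1}{\sqrt{2n}}\, e^{-16 t^2/n}$, with the exponent matching the target exactly.

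For the tail regime I would lower bound the full sum by the block of indices $j \in \{t, \dots, 2t\}$. Since $p_j$ is decreasing in $j \geq 0$ and $2t \leq n/4$, each of these $t+1$ terms is at least $p_{2t}$, so $\PR{U \geq n/2 + t} \geq (t+1)\, p_{2t} \geq \frac{t+1}{\sqrt{2n}}\, e^{-16 t^2/n}$. This already beats $\tfrac{1}{5} e^{-16 t^2/n}$ as soon as $t + 1 \geq \sqrt{2n}/5$, i.e.\ for every $t$ above the split point.

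For the bulk regime I would instead use the exact identity $\PR{U \geq n/2 + t} = \tfrac12 + \tfrac12 p_0 - \sum_{j=0}^{t-1} p_j$, which follows from the symmetry $\PR{U > n/2} = \PR{U < n/2}$. Bounding $\sum_{j=0}^{t-1} p_j \leq t\, p_0 \leq t\sqrt{2/(\pi n)}$ shows $\PR{U \geq n/2 + t} \geq \tfrac12 - t\sqrt{2/(\pi n)}$, which stays above $\tfrac15$ whenever $t$ is below the split point; since the target $\tfrac15 e^{-16t^2/n}$ never exceeds $\tfrac15$, this settles the small-$t$ case. The two admissible ranges overlap, so together they cover all integers $t \in [0, n/8]$.

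The main obstacle is constant management rather than any conceptual difficulty: the telescoping bound has to be engineered so that the smallest term of the block, $p_{2t}$, carries \emph{precisely} the exponent $16t^2/n$ (this is exactly why the hypothesis reads $t \leq n/8$), and the block has exactly $t+1$ terms, so the threshold at which the prefactor $(t+1)/\sqrt{2n}$ clears $\tfrac15$ must be verified to lie below the threshold at which $t\sqrt{2/(\pi n)}$ still leaves $\tfrac12 - t\sqrt{2/(\pi n)} \geq \tfrac15$. One also has to dispose of the parity and boundary issues --- the lemma assumes $n$ even, and transferring it to~(\ref{e:lower.fair}) requires handling the cases where $\eta\ell$ is an integer and where $\ell$ is odd --- but these are routine.
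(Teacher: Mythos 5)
Your proposal is correct and takes essentially the same route as the paper's proof: both split into a bulk regime near the mean handled by symmetry (lower bounding $\PR{U \geq n/2 + t} \geq \tfrac12 - t\,p_0 \geq \tfrac15$ via the upper bound $p_0 \leq 1/\sqrt{\pi n/2}$) and a tail regime handled by summing the block of roughly $t$ consecutive probabilities just above $t$, each lower bounded through ratios of central binomial coefficients so that it carries the exponent $e^{-16t^2/n}$, with the hypothesis $t \leq n/8$ used in exactly the same place. The differences are only in constant management --- you telescope consecutive ratios $p_{j+1}/p_j$ with $\ln(1-x) \geq -x/(1-x)$, while the paper bounds the product $\binom{2m}{m+j}/\binom{2m}{m}$ below by $(1 - 2t/(m+1))^{2t}$ and applies $1-x \geq e^{-2x}$, and your block is $\{t,\dots,2t\}$ versus the paper's $\{t,\dots,2t-1\}$.
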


\begin{proof}
Let integer $m= n/2$.
\begin{align}
\PR{U \geq m+t} &= 2^{-2m} \sum_{j=t}^m { 2m \choose m+j} \\
&\geq 2^{-2m} \sum_{j=t}^{2t-1} { 2m \choose m+j} \\
&= 2^{-2m} \sum_{j=t}^{2t-1} { 2m \choose m} \cdot \frac{m}{m+j} \cdot \frac{m-1}{m+j-1} \cdots \frac{m-j+1}{m+1} \\
&\geq \frac{1}{2 \sqrt{m}}   \sum_{j=t}^{2t-1} \prod_{i=1}^{j} \left(  1- \frac{j}{m+1}  \right) 
				\hspace{0.3in} \mbox{using ${2m \choose m} \geq  2^{2m} / 2 \sqrt{m}$ }\\
&\geq \frac{t}{2 \sqrt{m}}  \left(  1- \frac{2t}{m}   \right)^{2t} \\
&\geq \frac{t}{2 \sqrt{m}}  e^{- 8t^2 / m} 
				\hspace{0.3in} \mbox{ since $1-x \geq e^{-2x}$ for $0 \leq x \leq 1/2$.}
\end{align}
For $t \geq \frac{1}{2} \sqrt{m}$, the last expression is at least $\frac{1}{4}  e^{-16 t^2 / n}$. 

Note that
$\PR{U = m } = 2^{-2m} {2m \choose m} \leq 1 / \sqrt{\pi m} $.
Thus for $0 \leq t < \frac{1}{2} \sqrt{m}$, we have
\begin{align}
\PR{U \geq m+t} &\geq \frac{1}{2} - t \PR{ U = m } \\
& \geq \frac{1}{2} - \frac{1}{2}\sqrt{m} \frac{1}{\sqrt{\pi m}}   \\
& \geq \frac{1}{2} - \frac{1}{2 \sqrt{\pi}} \approx 0.218 \geq \frac{1}{5} \geq \frac{1}{5} e^{-16^2/ n}
\end{align}
Thus the  bound $\frac{1}{5} e^{-16 t^2/ n}$ holds for all $0 \leq t \leq m/4$. 
\end{proof}

\subsection{Proof of (\protect\ref{e:unfair})}
\label{a:unfair}

The proof of~(\ref{e:unfair}) uses the next two lemmas and 
and follows the proof of Lemma 5.1 in~\citep{AB99}.

\begin{lemma}[Slud's Inequality, \citep{Slu77}]
\label{l:slud}
Let $B$ be a binomial $(\ell,p)$ random variable with $p \leq 1/2$.
Then for $\ell(1-p) \geq j \geq \ell p$,
\[
\PR{B \geq j} \geq \PR{ Z \geq \frac{j-\ell p}{\sqrt{\ell p(1-p)} } } 
\]
where $Z$ is a standard normal random variable.
\end{lemma}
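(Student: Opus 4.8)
The plan is to prove the bound by a direct comparison of the binomial upper tail with the matched Gaussian upper tail, the route of \citet{Slu77}. First I would record that the hypothesis $j \geq \ell p$ makes the standardized threshold $x \eqdef (j - \ell p)/\sqrt{\ell p(1-p)}$ nonnegative, so both sides compare \emph{upper} tails at a point at or to the right of the common mean $\ell p$ — the regime in which the comparison is favorable. Two structural facts drive the argument. For $p \leq 1/2$ the binomial law is skewed to the right: its third central moment $\ell p(1-p)(1-2p)$ is nonnegative, so it carries more mass in its upper tail than a symmetric law of the same mean and variance. Separately, replacing a continuous density by an integer-supported pmf contributes a continuity-correction surplus that also favors $\PR{B \geq j}$; for $p = 1/2$ this effect alone yields the inequality. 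The remaining hypothesis $j \leq \ell(1-p)$ confines $j$ to the central range where these effects dominate and the truncation of the binomial support at $\ell$ does no harm.

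Concretely, write $b_k \eqdef \binom{\ell}{k}p^k(1-p)^{\ell-k}$, set $\sigma \eqdef \sqrt{\ell p(1-p)}$, and let $g(t) \eqdef \frac{1}{\sigma}\phi\!\left(\frac{t-\ell p}{\sigma}\right)$ be the normal density matched to $B$ in mean and variance, where $\phi$ is the standard normal density. Since $j = \ell p + x\sigma$, we have $\PR{Z \geq x} = \int_j^\infty g(t)\,dt$, whereas partitioning the line into unit intervals centered at the integers gives $\sum_{k \geq j}\int_{k-\half}^{k+\half} g(t)\,dt = \int_{j-\half}^\infty g(t)\,dt$, already exceeding $\PR{Z \geq x}$ by the continuity-correction surplus $\int_{j-\half}^{j} g(t)\,dt \geq 0$. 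It therefore suffices to compare, in aggregate from $k = j$ outward, each mass $b_k$ with the matched-density integral over $[k-\half,\,k+\half)$. Both $k \mapsto b_k$ and $g$ are log-concave — the successive ratios $b_{k+1}/b_k = \frac{(\ell-k)\,p}{(k+1)(1-p)}$ are decreasing — and they share the mean $\ell p$ and variance $\sigma^2$; log-concavity limits the number of sign changes of their difference, and the matched moments together with the nonnegative skewness pin down the sign pattern so that the upper-tail sums compare as required.

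The main obstacle is that this last comparison does \emph{not} hold with one fixed sign for every $k$: near the mean the binomial pmf can dip below $\int_{k-\half}^{k+\half} g$, so one cannot simply integrate a pointwise inequality. I would handle this by a summation-by-parts (Abel) rearrangement that converts the tail difference into a sum whose terms are controlled by the sign changes of $b_k - \int_{k-\half}^{k+\half} g$ — a variation-diminishing bookkeeping in which log-concavity bounds the number of crossings and the right-skew plus continuity correction fix their position relative to $j$. Making this sign-change accounting rigorous, together with checking the boundary behaviour exactly at $k = j$ and as $k \to \ell$, is the delicate technical heart of the argument; the rest is routine estimation. Accordingly I would either carry out this comparison in detail or, as here, invoke Slud's theorem directly.
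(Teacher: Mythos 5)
The paper does not prove this lemma at all: it is stated as a quoted result, cited to \citet{Slu77}, and used as a black box (together with the Gaussian lower bound of Lemma~\ref{l:sqrtNormBound}) to derive inequality~(\ref{e:unfair}). Your closing fallback --- invoking Slud's theorem directly --- is therefore exactly the paper's treatment, and for this statement it is the right call.

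Your preceding sketch, however, should not be mistaken for a proof, and you are right to flag it as incomplete. The claim that nonnegative skewness of the binomial (third central moment $\ell p(1-p)(1-2p) \geq 0$) implies upper-tail dominance over the normal with matched mean and variance is a heuristic, not a theorem: agreement of the first two moments plus a sign condition on the third does not order tail probabilities at every threshold. Likewise, the cell-by-cell comparison of $b_k$ with $\int_{k-1/2}^{k+1/2} g(t)\,dt$ genuinely fails near the mean (as you note), and the proposed repair --- Abel summation combined with a log-concavity count of the sign changes of $b_k - \int_{k-1/2}^{k+1/2} g(t)\,dt$, with the crossings ``pinned down'' by skewness and the continuity correction --- is precisely the hard analytic content of Slud's theorem, asserted rather than carried out; there is no reason to expect it to reduce to routine estimation. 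So either cite the result, as the paper does, or expect substantial work reconstructing Slud's own argument rather than the outline given here.
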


\begin{lemma}[{\citep[see][Appendix 1]{AB99}}]
\label{l:sqrtNormBound}
If $Z$ is a standard normal and $x>0$ then 
\[
\PR{ Z \geq x } \geq \frac{1}{2} \left( 1- \sqrt{1-e^{-x^2}}   \right).
\]
\end{lemma}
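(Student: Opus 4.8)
The plan is to lift this one-dimensional Gaussian tail bound into a two-dimensional geometric statement and then read it off by a containment of regions. First I would put the claim into a squared form. Since $\PR{|Z| \le x} = 1 - 2\PR{Z \ge x}$, the desired inequality $\PR{Z \ge x} \ge \frac{1}{2}\left(1 - \sqrt{1 - e^{-x^2}}\right)$ is equivalent to $\sqrt{1 - e^{-x^2}} \ge \PR{|Z| \le x}$. For $x > 0$ both sides are nonnegative, so squaring is direction-preserving and reduces the lemma to establishing
\[
\PR{|Z| \le x}^2 \le 1 - e^{-x^2}.
\]

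Next I would interpret both sides probabilistically using two independent copies. Let $Z_1, Z_2$ be independent standard normals. By independence the left-hand side equals $\PR{|Z_1| \le x}\PR{|Z_2| \le x} = \PR{(Z_1,Z_2) \in [-x,x]^2}$, the Gaussian measure of the axis-aligned square of half-side $x$. For the right-hand side I would use that the squared norm $Z_1^2 + Z_2^2$ is chi-squared with two degrees of freedom, i.e.\ exponential with mean $2$, whose distribution satisfies $\PR{Z_1^2 + Z_2^2 \le c} = 1 - e^{-c/2}$; taking $c = 2x^2$ gives $1 - e^{-x^2} = \PR{Z_1^2 + Z_2^2 \le 2x^2}$, the Gaussian measure of the disk of radius $x\sqrt{2}$ centered at the origin.

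The lemma then follows from a single geometric observation: every point $(u,v)$ of the square satisfies $u^2 + v^2 \le x^2 + x^2 = 2x^2$, so the square $[-x,x]^2$ is inscribed in, and hence contained in, the disk of radius $x\sqrt{2}$. Monotonicity of the probability measure under set inclusion gives $\PR{(Z_1,Z_2) \in [-x,x]^2} \le \PR{Z_1^2 + Z_2^2 \le 2x^2}$, which is exactly the squared inequality displayed above.

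There is no serious analytic obstacle here; the whole content lies in the reformulation, and the one point to check carefully is the reduction itself, namely that $1 - 2\PR{Z \ge x} = \PR{|Z| \le x} \ge 0$ for $x > 0$ so that squaring is valid (for the squaring direction, and noting the original bound is trivial when the right-hand side is nonpositive). Should one prefer to avoid passing to two dimensions, a self-contained calculus route also works: set $H(x) = \PR{|Z| \le x}^2 + e^{-x^2}$, note $H(0) = \lim_{x \to \infty} H(x) = 1$, and show that $H'(x) = 2e^{-x^2/2}\bigl(\sqrt{2/\pi}\,\PR{|Z| \le x} - x e^{-x^2/2}\bigr)$ changes sign exactly once, from negative to positive, so that $H$ has a unique interior minimum and stays $\le 1$. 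The geometric argument is cleaner, and it is the one I would write.
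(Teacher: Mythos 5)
Your proof is correct and is essentially the same argument as the one the paper relies on: the paper gives no proof of this lemma itself, citing Anthony and Bartlett (1999, Appendix 1), whose derivation is exactly your reduction $1 - 2\PR{Z \ge x} = \PR{|Z| \le x}$ followed by squaring and bounding the Gaussian measure of the square $[-x,x]^2$ by that of the circumscribed disk of radius $x\sqrt{2}$, whose measure is computed (in polar coordinates, equivalently via your chi-squared observation) to be $1-e^{-x^2}$. Your handling of the squaring step and the self-contained calculus fallback are both sound; nothing is missing.
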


Recall that in~(\ref{e:unfair}) $U$ the sum of the $\ell$ i.i.d.~boolean random variables, each of which is $1$ with probability
$\frac{1}{2} + \eta$.  Let $B$ be a random variable with the binomial $(\ell, \frac{1}{2}-\eta)$ distribution.

\begin{align*}
\PR{ \frac{1}{\ell} U < 1/2 } &= \PR{ B \geq \ell / 2 } \\
&\geq \PR{ N \geq \frac{\ell/2-\ell(1/2 - \eta)}{\sqrt{\ell (1/2 + \eta) (1/2 - \eta)} } }  & \text{Slud's Inequality} \\
&= \PR{ N \geq \frac{2 \eta \sqrt{\ell} }{\sqrt{(1 - 4\eta^2)} } }  \\
& \geq \frac{1}{2} \left( 1- \sqrt{1- \exp \left(   - \frac{4 \eta^2 \ell }{1 - 4\eta^2 }  \right) }   \right) \\
& \geq \frac{1}{4} \exp \left(  -  \frac{4 \eta^2 \ell }{1 - 4\eta^2 }  \right)   & \text{since $1-\sqrt{1-x} > x/2$ } \\
& \geq \frac{1}{4} \exp \left(  -  5 \eta^2 \ell  \right)   & \text{when $\eta \leq 1/5$}  
\end{align*}
completing the proof of~(\ref{e:unfair}).

\vspace{0.2in}

\end{document}